\theoremstyle{plain}
\newtheorem{theorem}{Theorem}[section]
\theoremstyle{definition}
\newtheorem{definition}[theorem]{Definition}
\newtheorem{example}[theorem]{Example}
\theoremstyle{remark}
\newtheorem{remark}[theorem]{Remark}
\definecolor{lightgray}{gray}{0.95}
\renewcommand{\epsilon}{\varepsilon}
\renewcommand{\phi}{\varphi}
\newcommand*{\N}{\mathbb{N}}
\newcommand*{\R}{\mathbb{R}}
\newcommand{\norm}[1]{\left \| #1 \right \|}
\newcommand{\abs}[1]{\left|#1 \right|}
\newcommand{\skal}[1]{\left \langle #1 \right \rangle}
\DeclareMathOperator{\argmin}{argmin}
\DeclareMathOperator{\dom}{dom}
\theoremstyle{definition}
\newtheorem{defi}{Definition}[section]
\theoremstyle{plain}
\newtheorem{thm}[defi]{Theorem}
\numberwithin{equation}{section}
\icmltitlerunning{(Almost) Smooth Sailing}
\begin{document}

\twocolumn[
\icmltitle{(Almost) Smooth Sailing: Towards Numerical Stability of Neural Networks Through Differentiable Regularization of the Condition Number}



\icmlsetsymbol{equal}{*}
\begin{icmlauthorlist}
\icmlauthor{Rossen Nenov}{equal,yyy,comp}
\icmlauthor{Daniel Haider}{equal,yyy}
\icmlauthor{Peter Balazs}{yyy}
\end{icmlauthorlist}

\begin{icmlauthorlist}

\end{icmlauthorlist}

\icmlaffiliation{yyy}{Acoustics Research Institute, Austrian Academy of Sciences, Vienna, Austria}
\icmlaffiliation{comp}{Department of Mathematics, University of Vienna, Vienna, Austria}

\icmlcorrespondingauthor{Rossen Nenov}{rossen.nenov@oeaw.ac.at}
\icmlcorrespondingauthor{Daniel Haider}{daniel.haider@oeaw.ac.at}



\icmlkeywords{Numerical Stability, Regularization, Condition Number, Differentiability, Denoising}

\vskip 0.3in]



\printAffiliationsAndNotice{\icmlEqualContribution} 

\begin{abstract}
Maintaining numerical stability in machine learning models is crucial for their reliability and performance. One approach to maintain stability of a network layer is to integrate the condition number of the weight matrix as a regularizing term into the optimization algorithm. However, due to its discontinuous nature and lack of differentiability the condition number is not suitable for a gradient descent approach. This paper introduces a novel regularizer that is provably differentiable almost everywhere and promotes matrices with low condition numbers.
In particular, we derive a formula for the gradient of this regularizer which can be easily implemented and integrated into existing optimization algorithms. We show the advantages of this approach for noisy classification and denoising of MNIST images.

\end{abstract}

\section{Introduction}








Numerical stability in neural networks refers to the sensitivity of model predictions and training dynamics to small perturbations in the input data, model parameters, or other computational operations. Good stability properties offer significant benefits, such as consistency enhancement and robustness in predictions, thereby improving generalization and interpretability \cite{Stability}.
Instabilities can arise due to several factors, such as the choice of activation functions, initialization, optimization hyperparameters, and quantization effects during training and inference. Several regularization methods have been developed to intercept this, e.g. dropout, lasso, randomness, etc. \cite{goodfellow2016deeplearning,zou1005regularization,hagemann2020StabilizingIN}.
Differentiability is essential here as it enables the incorporation of such methods into gradient-based optimization algorithms, hence allowing to gradually regularize the numerical stability during training.

\begin{figure}[t]
    \centering
    \includegraphics[width=0.9\columnwidth]{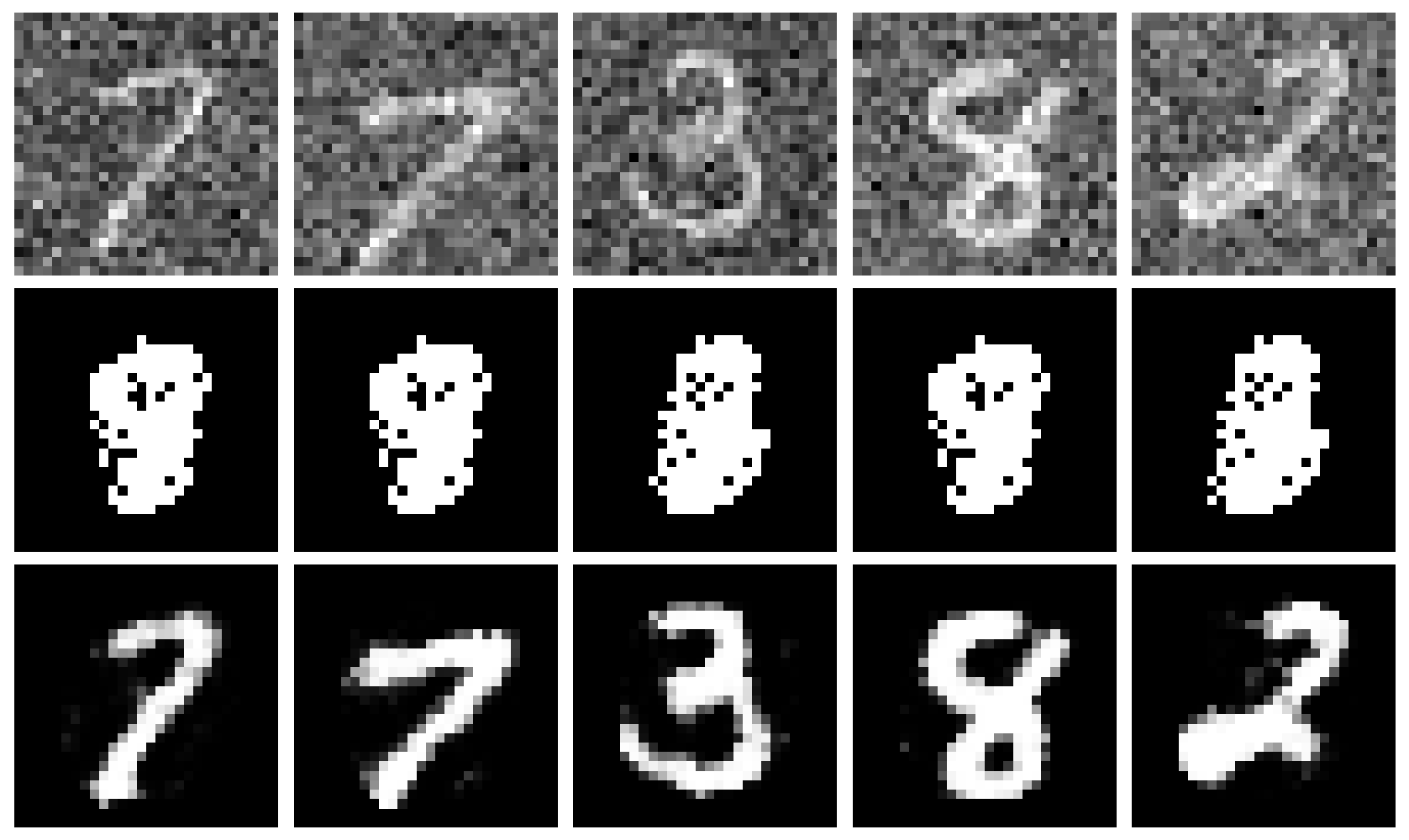}
    \caption{Results of MNIST denoising with autoencoders. Top: MNIST images with added Gaussian noise. Mid: No regularization. Bottom: Proposed regularization. While the vanilla autoencoder struggles significantly, the regularized one performs well.}
    \label{fig:denoising}
\end{figure}

In this paper, we focus on the numerical stability of a neural network by means of their weight matrices, and how to maintain it through regularization. In numerical linear algebra, a way to measure the numerical stability of a matrix $S$ is via its condition number. For invertible matrices $S$ it is defined as $\norm{S}_2\norm{S^{-1}}_2$, where $\norm{\cdot}_2$ is the matrix operator norm induced by the Euclidean norm \cite{Golub}. If $S$ is non-invertible or non-square, alternative definitions of a condition number have been proposed. In the context of least squares minimization a particularly common one is 
\begin{align} \label{kappa}
    \kappa (S):={\norm{S}}_2{\norm{S^\dagger}}_2,
\end{align}
where $S^\dagger $ is the Moore-Penrose inverse, or pseudoinverse \cite{Cond2}.
In this sense, a matrix is considered optimally conditioned if $\kappa(S) = 1$. Existing approaches to promote this kind of stability are based on adaptive optimization techniques \cite{hasannasab2020parseval}, iterative projections \cite{cisse2017parseval}, or adding $\kappa$ directly to the learning objective as a regularizing term \cite{balazs2024stableencoders}. In the latter approach, minimizing $\kappa$ via gradient-based optimization has been shown to work well. However, it should be noted that the function $\kappa:\R^{n\times m}\rightarrow \R$ is discontinuous and, therefore not differentiable. 

To address this issue, this paper introduces an alternative quantity as regularizer that achieves an optimal condition number, guarantees full rank, and is differentiable almost everywhere.

This manuscript is structured as follows. After this introduction we introduce the proposed regularizer and prove that its minimization is equivalent to minimizing the condition number $\kappa$. Subsequently, we prove that the regularizer is differentiable almost everywhere and derive the (sub-)differential of it in Section \ref{sec:3}. Finally, in Section \ref{sec:4} we demonstrate the benefits in numerical experiments.

\section{Matrix Regularization} \label{sec:2}
Let $S\in\R^{n\times m}$ be a matrix and let $\nu=\min\{n,m\}.$ Define $\sigma(S):\R^{n\times m} \rightarrow \R^{\nu}$ as the function that maps $S$ to its singular values in decreasing order. Let $\sigma_{\max}(S):=\sigma_1(S)$ and $\sigma_{\min >0}(S) := \min_{i\in \{1,...,\nu\}}\{\sigma_i(S) \mid\sigma_i(S)>0\}$ denote the largest and smallest non-zero singular values of $S$, respectively. The rank of a matrix is the number of its non-zero singular values.\\ 
We will denote the singular value decomposition of a matrix $S\in \R^{n\times m}$ as $S=U(\text{Diag}(\sigma(S)))V^T$, where $U\in \R^{n\times n}$ and $V\in \R^{m\times m} $ are unitary matrices and $\text{Diag}(\sigma(S))\in \R^{n\times m}$ denotes a rectangular diagonal matrix with $\text{Diag}(\sigma(S))_{i,i}=(\sigma(S))_i$.\\ 
Throughout this paper we will use the notion of the condition number $\kappa(S)$ of a matrix $S$ as defined in \eqref{kappa}.
 This quantity determines the numerical stability of $S$ by indicating how much the output $Sx$ can change in response to small changes in the input vector $x$.
It is known that $\norm{S}_2=\sigma_{\max}(S)$ and that $\norm{S^\dagger}_2={\sigma_{\min >0}(S)}^{-1}$ \cite{Golub}. This means minimizing $\kappa(S)$ corresponds to minimizing the ratio $\frac{\sigma_{\max}(S)}{\sigma_{\min >0}(S)} \geq 1$. While this quantity is perfectly well-defined for arbitrary non-zero matrices, it has drawbacks.\\
Clearly, $\kappa(S)=1$ is equivalent to the situation where all non-zero singular values of $S$ are equal. However, there is no guarantee or control over the amount of non-zero singular values. For example, matrices with only one non-zero singular value trivially attain the minimal condition number $\kappa$. Such matrices are not very useful as one is usually interested in full-rank solutions. Furthermore, the mapping $\kappa:\R^{n\times m}\rightarrow \R$ is discontinuous whenever a singular value approaches zero (Appendix \ref{app:disc}), and therefore does not allow a proper definition of a gradient. To circumvent both mentioned issues, we propose a different quantity instead:
\begin{align} \label{eq:regularizer}
    r(S) := \frac{1}{2}\norm{S}_2^2-\frac{1}{2\nu}\norm{S}_F^2,
\end{align}
where $\norm{\cdot}_F$ denotes the Frobenius-norm for matrices.
In the following theorem we show that matrices that minimize $r$ also minimize $\kappa$ and additionally have full rank, both desirable features for numerically stable matrices.
\begin{thm} \label{thm:Equiv}
    For any $S\in \R^{n\times m}$ the regularizer $r(S)$ defined in Eq. \ref{eq:regularizer} is non-negative. If $S\neq 0$, then $r(S) = 0$ if and only if $S$ has full rank and $\kappa(S)=1$.
\end{thm}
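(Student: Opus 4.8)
The plan is to pass everything through the singular value decomposition and reduce the statement to an elementary inequality. First I would recall the two identities already cited in the excerpt, $\norm{S}_2 = \sigma_{\max}(S) = \sigma_1(S)$ and $\norm{S}_F^2 = \sum_{i=1}^{\nu}\sigma_i(S)^2$, and substitute them into \eqref{eq:regularizer} to obtain
\begin{align*}
 r(S) = \frac{1}{2}\sigma_1(S)^2 - \frac{1}{2\nu}\sum_{i=1}^{\nu}\sigma_i(S)^2 = \frac{1}{2\nu}\sum_{i=1}^{\nu}\bigl(\sigma_1(S)^2 - \sigma_i(S)^2\bigr).
\end{align*}
Since $\sigma(S)$ lists the singular values in decreasing order, $\sigma_1(S) \geq \sigma_i(S) \geq 0$ for every $i$, so each summand is non-negative and hence $r(S) \geq 0$. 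This already settles the first claim.

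For the equality characterization I would argue that $r(S) = 0$ if and only if every summand in the displayed formula vanishes, i.e. $\sigma_i(S)^2 = \sigma_1(S)^2$ for all $i \in \{1,\dots,\nu\}$; because all singular values are non-negative this is equivalent to $\sigma_1(S) = \sigma_2(S) = \cdots = \sigma_{\nu}(S)$. Now take $S \neq 0$, so that $\sigma_1(S) > 0$. If $r(S) = 0$, then all $\nu$ singular values equal the common positive value $\sigma_1(S)$; in particular $S$ has $\nu = \min\{n,m\}$ non-zero singular values, i.e. full rank, and $\sigma_{\max}(S) = \sigma_{\min >0}(S)$, so $\kappa(S) = \norm{S}_2\norm{S^\dagger}_2 = \sigma_{\max}(S)/\sigma_{\min >0}(S) = 1$. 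Conversely, if $S$ has full rank then none of the $\nu$ singular values is zero, and $\kappa(S) = 1$ forces $\sigma_{\max}(S) = \sigma_{\min >0}(S)$; combined with non-vanishing this gives $\sigma_1(S) = \cdots = \sigma_{\nu}(S)$, hence $r(S) = 0$.

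I do not expect a genuine obstacle here: the content is a one-line rewriting plus a sum-of-non-negative-terms argument. The only points requiring care are bookkeeping ones — keeping the definition of rank as ``number of non-zero singular values'' consistent with ``full rank'' meaning rank exactly $\nu$, and making explicit that for $S \neq 0$ the condition ``full rank and $\kappa(S)=1$'' is precisely ``all $\nu$ singular values are equal and positive'', which is exactly the zero set of $r$ among non-zero matrices. I would also remark in passing that the hypothesis $S \neq 0$ is only needed to exclude the degenerate case where all singular values are zero, for which $r(S)=0$ but neither full rank nor $\kappa(S)=1$ makes sense.
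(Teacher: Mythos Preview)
Your proof is correct and follows essentially the same route as the paper: both substitute the singular-value identities for $\norm{S}_2$ and $\norm{S}_F$, rewrite $r(S)$ in terms of the $\sigma_i(S)$, and observe non-negativity from $\sigma_1(S)\geq\sigma_i(S)$; the paper phrases this as ``maximum minus average'' while you expand it as the average of the non-negative terms $\sigma_1(S)^2-\sigma_i(S)^2$, which is the same computation and makes the equality case equally transparent.
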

\begin{proof}
Since $\norm{S}_F^2 =\sum_{i=1}^\nu\sigma_i^2(S) $ \cite{Golub}, we observe that
\begin{align} \label{Th:Eq:Trace}
    \norm{S}_2^2 - \frac{1}{\nu}\norm{S}_F^2=  \sigma_{\max}^2(S) - \frac{1}{\nu} \sum_{i=1}^\nu\sigma_i^2(S) \geq 0,
\end{align}
as $\sigma_{\max}(S)$  is the largest singular value and, hence, the difference between $\sigma^2_{\max}(S)$ and the average value of the squared singular values of $S$ is always non-negative, thus $r(S)\geq 0$. It is straightforward to see that \eqref{Th:Eq:Trace} attains $0$ if and only if $S$ has $\nu$ singular values that are all equal. Since for full rank matrices $S$ all $\nu$ singular values are non-zero, $\kappa(S)=1$ is equivalent to $\sigma_{\max}(S) \equiv \sigma_i(S)>0$ for all $i\in \{1,...,\nu\}$, which concludes the result.
\end{proof}
As noticed before, the condition number is discontinuous and approaches $+\infty$ if a singular value is approaching zero. Thus, finding a tight connection between the regularizer $r$ and the condition number $\kappa$ becomes challenging. The following theorem provides a relationship between the regularizer and the condition number, capturing the divergent behavior of the condition number, whenever $\sigma_{\min>0}(S)$ vanishes. A proof can be found in Appendix \ref{app:proofs}. 
\begin{thm} \label{thm:Approximation}
For $S \in \R^{n\times m}$ it holds that
\begin{align*}
    \kappa(S) \leq e^{{\nu}{\sigma_{\min>0}(S)^{-2}} r(S)}.
\end{align*}
\end{thm}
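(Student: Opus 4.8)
The plan is to pass to singular values, where the statement collapses to a one-variable estimate. Assume $S\neq 0$ (otherwise $\sigma_{\min>0}(S)$ is not defined). Write the singular values in decreasing order as $\sigma_1\ge\sigma_2\ge\dots\ge\sigma_\nu\ge 0$, let $\rho$ be the rank of $S$, so that $1\le\rho\le\nu$, and abbreviate $\sigma_{\max}=\sigma_1$ and $\sigma_{\min>0}=\sigma_\rho$. Using $\norm{S}_2=\sigma_{\max}$ and $\norm{S^\dagger}_2=\sigma_{\min>0}^{-1}$ we have $\kappa(S)=\sigma_{\max}/\sigma_{\min>0}$, and from the definition of $r$,
\begin{align*}
\nu\,\sigma_{\min>0}^{-2}\,r(S)=\frac{1}{2\sigma_{\min>0}^2}\Bigl(\nu\,\sigma_{\max}^2-\sum_{i=1}^{\nu}\sigma_i^2\Bigr)=\frac{1}{2\sigma_{\min>0}^2}\sum_{i=1}^{\nu}\bigl(\sigma_{\max}^2-\sigma_i^2\bigr).
\end{align*}
Taking logarithms, the claimed bound is equivalent to $\log\kappa(S)\le\nu\,\sigma_{\min>0}^{-2}\,r(S)$.

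Next I would throw away all but one term of the sum. Since $\sigma_{\max}$ is the largest singular value, every summand $\sigma_{\max}^2-\sigma_i^2$ is nonnegative, so retaining only the index $i=\rho$ gives
\begin{align*}
\nu\,\sigma_{\min>0}^{-2}\,r(S)\ge\frac{\sigma_{\max}^2-\sigma_{\min>0}^2}{2\sigma_{\min>0}^2}=\frac{1}{2}\left(\frac{\sigma_{\max}^2}{\sigma_{\min>0}^2}-1\right)=\frac{1}{2}\bigl(\kappa(S)^2-1\bigr).
\end{align*}
Hence it suffices to prove the scalar inequality $\log t\le\tfrac12(t^2-1)$ for all $t\ge 1$ and apply it at $t=\kappa(S)\ge 1$.

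Finally, that inequality is elementary: the function $f(t)=\tfrac12(t^2-1)-\log t$ satisfies $f(1)=0$ and $f'(t)=t-1/t=(t^2-1)/t\ge 0$ for $t\ge 1$, so $f\ge 0$ on $[1,\infty)$. Chaining the three displays yields $\log\kappa(S)\le\tfrac12(\kappa(S)^2-1)\le\nu\,\sigma_{\min>0}^{-2}\,r(S)$, and exponentiating gives the assertion. The only genuine choice in the argument is the reduction step, i.e.\ how much of the singular-value gap $\sum_i(\sigma_{\max}^2-\sigma_i^2)$ to keep: retaining the single term indexed by the rank is exactly enough to expose the factor $\kappa(S)^2-1$ and to close with a clean one-variable bound, while also correctly absorbing any zero singular values into the discarded (nonnegative) terms. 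Everything else — the SVD rewriting and the calculus — is routine, and no regularity or genericity hypothesis on $S$ is required.
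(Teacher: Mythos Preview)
Your proof is correct and follows essentially the same route as the paper's: both arguments reduce to the pair of inequalities $2\nu\,r(S)\ge\sigma_{\max}^2-\sigma_{\min>0}^2$ and $2\ln\kappa(S)\le(\sigma_{\max}^2-\sigma_{\min>0}^2)/\sigma_{\min>0}^2$, then combine them. The paper obtains the logarithm bound via the Mean Value Theorem applied to $\ln$ on $[\sigma_{\min>0}^2,\sigma_{\max}^2]$, while you prove the equivalent scalar inequality $\log t\le\tfrac12(t^2-1)$ by a derivative check; and the paper bounds the sum of squared singular values from above by replacing each $\sigma_i$ with $\sigma_{\max}$, while you equivalently drop all but the $\rho$-th nonnegative summand in $\sum_i(\sigma_{\max}^2-\sigma_i^2)$---these are cosmetic differences, not different strategies.
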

 This theorem shows that for small $r(S)$, the condition number remains small as long as $\sigma_{\min>0}(S)$ is bounded away from zero. This is important as in practical scenarios, achieving the absolute minimum of regularizers is rarely possible.

We note the close relation to the well-known Tikhonov regularization \cite{Tikh}, also known as $L_2$-regularization or Ridge regression \cite{Ridge}, where $\|S\|_F^2$ is used as a regularizer. While it is effective in improving stability of the problem formulation \cite{Inverse}, it does not directly address the issue of maintaining a low condition number of the solution, which is crucial for numerical stability.

\section{Differential Calculus}\label{sec:3}
In this chapter we derive the (sub)-differential properties and formula of the proposed regularizer $r$ in Eq. \eqref{eq:regularizer}.

First, note that the function $r:\R^{n\times m}\rightarrow \R$ is the difference of two convex functions and therefore non-convex. We will show that it is differentiable \textit{almost everywhere} and its subdifferential exists everywhere. 
For a function \( f: \mathbb{R}^n \to \mathbb{R} \) the subdifferential at a point \( \bar{x} \in \mathbb{R}^n \) is simply denoted by \( \partial f(\bar{x}) \).  In this work, we use the well-known Mordukhovich subdifferential $\partial_M$, also known as the limiting subdifferential (Appendix \ref{app:sub}).

\clearpage
Intuitively, this subdifferential generalizes the concept of subgradients to non-convex non-smooth functions, capturing the behavior at points where they are not differentiable by considering limits of gradients of nearby smooth points. The Mordukhovich subdifferential serves as a unifying framework for subdifferentials, accommodating both smooth and non-smooth, convex and non-convex functions. For convex functions it coincides with the convex subdifferential, and for smooth functions it contains only the gradient (Theorems \ref{thm:app:coinc} and \ref{thm:app:coincDiff}). For this reason and to enhance clarity throughout the main part of the paper, we will use the notation $\partial$ for all subdifferentials at points, where the gradient is not unique.

In the following theorems we derive the (sub-)differential for the regularizer $r$, using classical results from convex analysis and the work of A.S. Lewis on the differentiability of univariate matrix functions \cite{Lewis}. The detailed definitions and theorems used are listed in the Appendix \ref{app:sub}.
 
\begin{thm}\label{Thm: Derivative} Let $S\in \R^{n\times m}$ with a singular value decomposition given by $S=U(\text{Diag}(\sigma(S))V^T$. Let $u_i$ and $v_i$ denote the respective column vectors of $U$ and $V$.

If $\sigma_1(S) > \sigma_i(S)$ for $i>1$, then $r$ is differentiable at $S$:  
\begin{align*}
    \nabla r(S) = \sigma_{\max}(S)u_1v_1^T - \frac{1}{\nu}S.
\end{align*}
Otherwise, the subdifferential of $r$ at $S$ is given by:
\begin{align*}
    \partial r(S) = \text{conv} \{\sigma_{i}(S)u_iv_i^T \mid  i:\sigma_i(S) = \sigma_{\max}(S) \} - \frac{1}{\nu}S.
\end{align*}
\end{thm}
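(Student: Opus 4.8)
The plan is to decompose $r$ as $r(S) = f(S) - \frac{1}{2\nu}\norm{S}_F^2$, where $f(S) = \frac{1}{2}\norm{S}_2^2 = \frac{1}{2}\sigma_{\max}(S)^2$. The second summand is smooth with gradient $\frac{1}{\nu}S$ (since $\frac{1}{2}\norm{S}_F^2 = \frac12\sum_{ij}S_{ij}^2$), so by the sum rule for the limiting subdifferential applied to the sum of a (possibly nonsmooth) function and a continuously differentiable one, it suffices to compute $\partial f(S)$ and then subtract $\frac{1}{\nu}S$. Thus the whole problem reduces to understanding the subdifferential of the squared spectral norm, which is a spectral (unitarily invariant) function of $S$ in the sense of Lewis: $f(S) = g(\sigma(S))$ with $g(x) = \frac12\max_i x_i^2$ on the relevant domain. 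Actually, since $x \mapsto \frac12 x^2$ is increasing on $[0,\infty)$, we have $\tfrac12\sigma_{\max}(S)^2 = \tfrac12(\max_i \sigma_i(S))^2$, and $g$ is a symmetric convex function of the singular values, so $f$ is convex and Lewis's theory for convex spectral functions (or his differentiability results for univariate/spectral matrix functions) applies cleanly.

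The key steps, in order. First, recall Lewis's formula: for a symmetric function $g$ and the induced spectral function $f = g\circ\sigma$, the subdifferential is $\partial f(S) = \{U\,\mathrm{Diag}(\mu)\,V^T : \mu \in \partial g(\sigma(S)),\ U,V \text{ from an SVD of }S\}$, with the appropriate compatibility between $\mu$ and the singular-vector blocks. Second, compute $\partial g$ at the point $\sigma(S)$ for $g(x) = \tfrac12\max_i x_i^2$: writing $g = h\circ\max$-type composition, or directly, $\partial g(\sigma(S)) = \mathrm{conv}\{\sigma_{\max}(S)\,e_i : i \text{ with } \sigma_i(S) = \sigma_{\max}(S)\}$, because the gradient of the $i$-th coordinate's contribution $\tfrac12 x_i^2$ is $x_i e_i = \sigma_{\max}(S) e_i$ on the active set, and we take the convex hull over the tied maximal indices (this is the standard subdifferential of a pointwise max of smooth functions). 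Third, push this through Lewis's formula: each $e_i$ with weight $\sigma_{\max}(S)$ yields the rank-one matrix $\sigma_{\max}(S)\,u_i v_i^T$, and taking convex combinations gives $\partial f(S) = \mathrm{conv}\{\sigma_i(S) u_i v_i^T : i,\ \sigma_i(S) = \sigma_{\max}(S)\}$. Fourth, in the case $\sigma_1(S) > \sigma_i(S)$ for all $i > 1$, the active set is the singleton $\{1\}$, so $\partial g(\sigma(S))$ is a single point, hence $f$ is differentiable there (invoking the differentiability criterion for spectral functions — a convex spectral function is differentiable at $S$ iff $g$ is differentiable at $\sigma(S)$), giving $\nabla f(S) = \sigma_{\max}(S)u_1 v_1^T$. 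Finally, subtract $\frac{1}{\nu}S$ in both cases, using that adding a smooth function preserves differentiability and the sum rule holds with equality.

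The main obstacle, and the part that needs the most care, is the correct invocation of Lewis's machinery for \emph{rectangular} matrices and the handling of singular-vector non-uniqueness: when a singular value is repeated, the vectors $u_i, v_i$ are not uniquely determined, yet the set $\mathrm{conv}\{\sigma_i(S)u_iv_i^T\}$ must be well-defined independent of the chosen SVD. I would address this by appealing to Lewis's theorem in the form stated in the appendix (which builds this invariance in: the subdifferential set is the same for any SVD), rather than arguing it by hand; the well-definedness is exactly the content of his result, so citing it in the precise rectangular/singular-value form is what makes the final formula legitimate. A secondary point to check is that the differentiability direction is genuinely "iff" at the level we need — we only use one direction (simple top singular value $\Rightarrow$ differentiable), which follows because $\partial g$ is a singleton there and a convex function with a singleton subdifferential is differentiable, then transported through the spectral correspondence.
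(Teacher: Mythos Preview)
Your proposal is correct and follows essentially the same route as the paper: write $r$ as the spectral function $\tfrac12\max_i x_i^2$ composed with $\sigma$, minus the smooth Frobenius term, compute the subdifferential of the max via the pointwise-max rule, transport it through Lewis's characterization of subgradients for absolutely symmetric spectral functions, and conclude differentiability in the simple-top-singular-value case from the singleton subdifferential. The only cosmetic difference is that the paper also routes the Frobenius part through Lewis's gradient formula rather than differentiating $\tfrac{1}{2\nu}\norm{S}_F^2$ directly, which is immaterial.
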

The following results underscores the practicality of our regularizer. It shows that applying gradient descent steps with respect to our regularizer decreases the condition number of the updated matrix. A proof is found in Appendix \ref{app:proofs}.  
\begin{thm}\label{thm:DescentStep} Assume that for $S \in \R^{n\times m}$ the largest singular value is unique, i.e. $\sigma_1(S) > \sigma_i(S)$ for $i \in \{2,\dots,\nu\}$.  Then there exists a $L\in (0,1]$, s.t. for $S'=S-\lambda\nabla r(S)$ with $\lambda \in (0,L]$ it holds $$\kappa(S') < \kappa(S).$$
\end{thm}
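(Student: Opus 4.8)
The plan is to feed the explicit gradient from Theorem~\ref{Thm: Derivative} into the update and read off the singular values of $S'$ from the (unchanged) singular vectors of $S$. Writing $S=\sum_{i=1}^{\nu}\sigma_i u_iv_i^T$ and $\nabla r(S)=\sigma_{\max}(S)u_1v_1^T-\tfrac1\nu S$, the step rearranges to
\begin{align*}
S' \;=\; \sigma_1\Bigl(1-\lambda\tfrac{\nu-1}{\nu}\Bigr)u_1v_1^T \;+\; \Bigl(1+\tfrac{\lambda}{\nu}\Bigr)\sum_{i=2}^{\nu}\sigma_i u_iv_i^T .
\end{align*}
Since $\{u_i\}$ and $\{v_i\}$ are orthonormal, as soon as $\lambda$ is small enough that the first coefficient stays nonnegative this expression is itself an SVD of $S'$; hence the singular values of $S'$ are $c_1:=\sigma_1(1-\lambda\tfrac{\nu-1}{\nu})$ together with $c_i:=(1+\tfrac\lambda\nu)\sigma_i$ for $i\ge 2$. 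In words, one gradient step strictly shrinks the top singular value and uniformly inflates all the others, and turns no zero singular value into a nonzero one, so $\operatorname{rank}(S')=\operatorname{rank}(S)$ for small $\lambda$.

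Next I would dispose of the degenerate case. Under the uniqueness hypothesis, $\kappa(S)=1$ forces $\operatorname{rank}(S)=1$, which is preserved by the step, so there $\kappa(S')=\kappa(S)$ and there is nothing to prove; thus we may assume $\rho:=\operatorname{rank}(S)\ge 2$ (so in particular $\nu\ge 2$, making $1-\lambda\tfrac{\nu-1}{\nu}$ genuinely less than $1$), whence $\sigma_{\min>0}(S)=\sigma_\rho\le\sigma_2<\sigma_1=\sigma_{\max}(S)$ and $\kappa(S)=\sigma_1/\sigma_\rho>1$. It then suffices to show that for small $\lambda$ the maximum of $\{c_1,\dots,c_\rho\}$ is $c_1$ and the minimum is $c_\rho$: granting this,
\begin{align*}
\kappa(S')=\frac{\sigma_1\bigl(1-\lambda\tfrac{\nu-1}{\nu}\bigr)}{\bigl(1+\tfrac\lambda\nu\bigr)\sigma_\rho}<\frac{\sigma_1}{\sigma_\rho}=\kappa(S),
\end{align*}
since the numerator has decreased and the denominator increased.

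The one delicate point is the choice of $L$, and this is where I expect the bookkeeping to be the main (though routine) obstacle: one must ensure that no inflated $c_i$ overtakes the shrunk $c_1$ as the maximum, and symmetrically that $c_1$ itself does not become the smallest nonzero value. Both follow from continuity: at $\lambda=0$ the needed relations $c_1>c_2\ge\cdots\ge c_\rho$ and $c_1>c_\rho$ hold with \emph{strict} inequality (because $\sigma_1>\sigma_2$ and $\sigma_1>\sigma_\rho$), and each $c_i$ is affine in $\lambda$, so these inequalities and $c_1>0$ persist on an interval $(0,L]$ with an explicit $L\in(0,1]$, e.g. $L=\min\bigl\{1,\ \nu(\sigma_1-\sigma_2)/((\nu-1)\sigma_1+\sigma_2)\bigr\}$. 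Everything else is the one-line algebra of the displayed ratio.
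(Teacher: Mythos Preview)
Your proof is correct and follows essentially the same approach as the paper: both express $S'$ in the SVD basis of $S$, read off the updated singular values $c_1=(1+\tfrac{\lambda}{\nu}-\lambda)\sigma_1$ and $c_i=(1+\tfrac{\lambda}{\nu})\sigma_i$ for $i\ge2$, and compare the extremal ratio. The only difference is bookkeeping: you take $L$ small enough that $c_1$ remains the top singular value and conclude in one line, whereas the paper chooses $L$ only so that $c_1\ge c_\rho$ and then runs a two-case analysis depending on whether $c_2$ has overtaken $c_1$ (obtaining $\kappa(S')=\sigma_2/\sigma_\rho$ in that case), which buys a marginally larger admissible step range but is unnecessary for the bare existence claim.
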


\begin{remark}
For a weight matrix at random initialization, $r$ is differentiable almost surely. This follows from the fact that the space of real symmetric matrices with at least one repeated eigenvalue has codimension 2 in the space of all real symmetric matrices \cite{Tao}. Consequently, for a matrix $S$ whose entries are i.i.d. samples from a typical continuous distribution, such as Gaussian or uniform, the matrix $S^TS$ has distinct eigenvalues with probability one. Therefore, $S$ almost surely has distinct singular values.
\end{remark}




\section{Numerical Experiments}\label{sec:4}
We demonstrate the functionality and the benefits of the proposed regularizer through a series of numerical experiments, which are intentionally kept basic to illustrate the core concepts. A comparative analysis with Tikhonov regularization of the results for the experiments is provided in Appendix \ref{app:tikh}. A Python implementation can be accessed via the following link: \href{https://github.com/danedane-haider/Almost-Smooth-Sailing}{github.com/danedane-haider/Almost-Smooth-Sailing}.

\subsection{Basic Functionality}\label{sec:basic}
Introducing regularization inevitably leads to a trade-off between the main optimization goal and achieving the desired regularization. To illustrate the impact of our regularizer, let us consider the matrix least-squares problem
\begin{align}\label{eq:LQM}
    \min_{W\in \R^{n\times m}} \norm{WX-Y}_F^2 + \lambda \cdot r(W),
\end{align}
where $X \in \R^{m\times d}$ and $Y \in \R^{n\times d}$ are fixed. The parameter $\lambda\geq 0$ is our regularization parameter, which controls the trade-off between the objective and the regularization. 

For the experiment we choose $n=20$, $m=50$ and $d=100$, and let $X$ and $Y$ be Gaussian random matrices. We employ gradient descent with the gradient formulas provided in Theorem \ref{Thm: Derivative} for $10^5$ steps, compare the results for different values of $\lambda $ and denote the resulting matrices by $W_\lambda$. We repeat this experiment 10 times for each $\lambda\in\{0,10,...,100\}$ value.

In Figure \ref{fig:lsm} we plot the ratio of the (approximation) errors $\frac{\norm{W_\lambda X-Y}_F}{\norm{W_0X-Y}_F}$ and the condition number $\kappa(W_\lambda)$. We observe that increasing $\lambda$ leads to an slight increase in the error and a sharp decrease in the condition number, as expected. For large values of $\lambda$ the condition number becomes nearly optimal, while the error increases by less than $6\%$ compared to the non-regularized case.
\begin{figure}[t]
    \centering
    \includegraphics[width=\columnwidth]{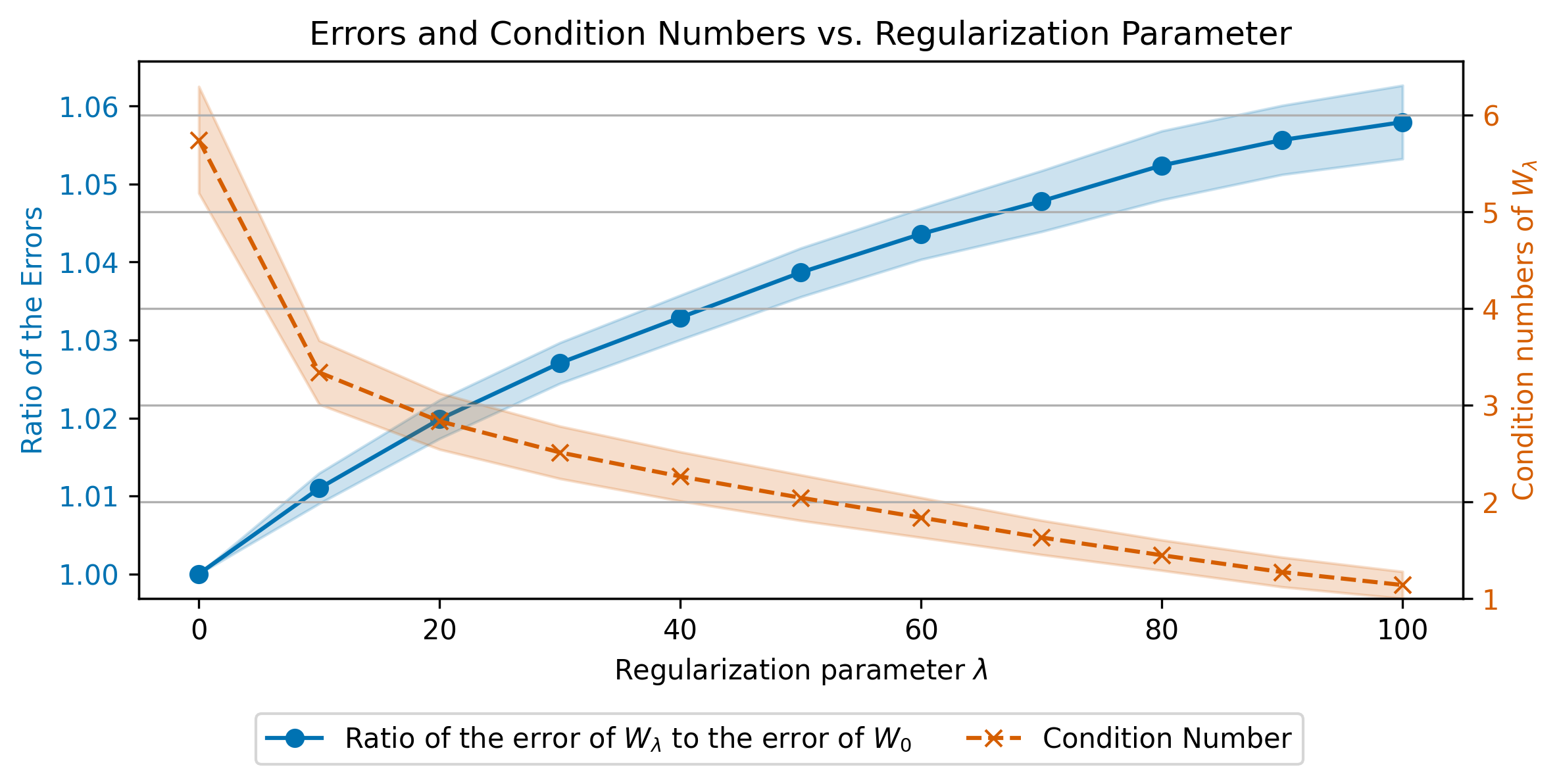}
    \caption{Results of least-squares minimization of \eqref{eq:LQM} after $10^5$ iterations for different regularization parameter $\lambda$ values}
    \label{fig:lsm}
\end{figure}

In the following experiments we integrate the regularizer into the training of a neural network on MNIST \cite{lecun2010mnist}, demonstrating the benefits of a well-conditioned model in the presence of noise.

\subsection{Noisy MNIST Classification}
For a proof-of-concept approach, we choose the classifier model $\Phi$ to be a small neural network with two dense layers, the first one with sigmoid activation, the second one with soft-max. To stabilize the weight matrix of the first layer, $W_1 \in \R^{2048\times 784}$, we perform empirical risk minimization (ERM) using a regularized cross-entropy loss
\begin{equation}
    \mathcal{L}(x;\Phi) = -\sum_{i=1}^{10} y[i] - \log(\Phi(x)[i]) + \lambda \cdot r(W_1),
\end{equation}
where $y[i]$ and $\Phi(x)[i]$ are $i$-th components of the target and predicted label vector, respectively. Optimization is done for $50$ epochs using Adam with a learning rate of $0.0001$.

Table \ref{tab:class} shows the performances of the same model, trained with different values of $\lambda$, and tested on unseen data with added Gaussian noise for different signal-to-noise ratios: $\infty,1$, and $0.5$. In the absence of noise (SNR $\infty$) one can clearly see the influence of the regularizer on the classification performance in terms of the mentioned trade-off. When increasing the noise level the better conditioned models start outperforming the others due to the induced robustness properties. At initialization the condition number is $4.28$.

Clearly, the final goal is to achieve both, best performance for all noise levels and optimal stability. However, it needs further research to determine if and when this is possible.

\begin{table}[t]
  \centering
  \begin{tabular}{@{}l|llllll@{}}
    \toprule
    \rowcolor{lightgray}
     $\lambda$ & $\kappa(W_1)$ & SNR $\infty$ & SNR 1 & SNR 0.5\\
    \midrule
     $0$ & 43.12 & \textbf{98.42 \%} & \textbf{93.80 \%} & 71.91 \% \\
     $10^{-3}$ & 9.43 & \textbf{98.38 \%} & 91.72 \% & 62.51 \%\\
     $10^{-2}$ & 4.62 & 98.11 \% & 91.95 \% & 63.61 \%\\
     $10^{-1}$ & \textbf{1.45} & 96.77 \% & \textbf{93.68 \%} & \textbf{74.27 \%} \\
     $1$ & \textbf{1.53} & 96.50 \% & 92.61 \% & \textbf{84.25} \%\\
    \bottomrule
  \end{tabular}
  \caption{The table presents the condition numbers of the first network layer ($\kappa(W_1)$) and the classification accuracy on the test set with different SNRs: $\infty$ (no noise), $1$, and $0.5$. These results are compared for different values of the hyperparameter $\lambda$.
  The more noise is present (low SNR), the more effective the regularizer is.}
  \label{tab:class}
\end{table}

\subsection{Denoising MNIST}
For denoising we use a basic autoencoder architecture with two dense layers in the encoder and decoder, respectively. Let the weight matrices in order of application be denoted by $E_1,E_2,D_2,D_1$. We set $E_1,D_1^\top\in \R^{256\times 784}$ and $E_2,D_2^\top\in \R^{32\times256}$ and use ReLU activation for all except the last layer, which uses a sigmoid activation. For training, we perform ERM with respect to the regularized $\ell_2$ loss
\begin{align}\label{eq:loss2}
\begin{split}
    \mathcal{L}(\hat{x};\Phi) = \norm{x-\Phi(\hat{x})}_2   &+ \lambda_1 \cdot (r(E_1) + r(D_1))  \\ &
    + \lambda_2 \cdot (r(E_2) + r(E_2)),
\end{split}
\end{align}
where $\hat{x}$ are noisy versions of $x$ with a SNR of 1.
The optimizer is Adam with a learning rate of $0.05$ for $50$ epochs.

Table \ref{tab:denoise} shows the condition numbers of all weight matrices of the naively trained autoencoder ($\lambda_1=\lambda_2=0$) and the regularizer one ($\lambda_1=0.05, \lambda_2=0.01$). Upon initial observation, it becomes immediately apparent that the condition numbers of the outer weight matrices are absurdly high ($\approx 600!$), indicating severe numerical instabilities.
Figure \ref{fig:denoising} illustrates the denoising performance, qualitatively. It is noteworthy that the naively trained model appears to encounter difficulties in reconstructing the images at all, whereas the regularized model demonstrates remarkable performance, even in the presence of substantial noise.

We are aware that there exist more optimal architectures for both classification \cite{Classification} and denoising \cite{Denoising}, however, the objective of this experiments is to illustrate the advantages of the proposed regularizer in a fundamental setting.

\begin{table}[t]
  \centering
  \begin{tabular}{@{}ll|lllll@{}}
    \toprule
    \rowcolor{lightgray}
     $\lambda_1$ & $\lambda_2$ & $\kappa(E_1)$ & $\kappa(E_2)$ & $\kappa(D_2)$ & $\kappa(D_1)$\\
    \midrule
     $0$ & $0$ & 604.43 & 59.58 & 30.76 & 102.82 \\
     $0.1$ & $0.005$ & 23.39 & 1.08 & 1.07 & 12.35 \\
    \bottomrule
  \end{tabular}
  \caption{Condition numbers of all weight matrices of the trained autoencoder for image denoising, with and without regularization. It is noticeable that the weight matrices of the outer layers ($E_1,D_1$) are absurdly high if not regularized.}
  \label{tab:denoise}
\end{table}

\section{Conclusion}
Traditional approaches to regularization, such as Tikhonov regularization, often fail to directly address the condition number, which is crucial for maintaining numerical stability. This paper introduces a novel almost everywhere differentiable regularizer that enhances numerical stability in neural networks by promoting low condition numbers in their weight matrices. 
Through a theoretical analysis and a series of numerical experiments, we proved and demonstrated the properties of this regularizer. In a noisy classification and a denoising task using the MNIST dataset, models that are trained with the proposed regularizer exhibit significantly lower condition numbers of the weight matrices and show robustness against noise successfully compared to models that are trained without regularization. 


\section*{Acknowledgment}
The authors thank the reviewers for their time reviewing and their feedback. The work of R. Nenov and P. Balazs was supported by the FWF project NoMASP (P 34922). D. Haider is recipient of a DOC Fellowship of the Austrian Academy of Sciences at the Acoustics Research Institute (A 26355).

\clearpage
\printbibliography

@article{lecun2010mnist,
  title={{MNIST} handwritten digit database},
  author={LeCun, Yann and Cortes, Corinna and Burges, CJ},
  journal={ATT Labs [Online]. Available: http://yann.lecun.com/exdb/mnist},
  volume={2},
  year={2010}
}

@article{Tao,
  author    = {Benjamin Schlein},
  title     = {Terence {T}ao: “{T}opics in Random Matrix Theory”},
  journal   = {Jahresbericht der Deutschen Mathematiker-Vereinigung},
  volume    = {115},
  pages     = {57--59},
  year      = {2013}
}

@article{zou1005regularization,
    author = {Zou, Hui and Hastie, Trevor},
    title = {Regularization and Variable Selection Via the Elastic Net},
    journal = {Journal of the Royal Statistical Society Series B: Statistical Methodology},
    volume = {67},
    number = {2},
    pages = {301-320},
    year = {2005}
}

@book{goodfellow2016deeplearning,
   author= {I. Goodfellow and Y.  Bengio and A.  Courville},
   title = {Deep Learning},
   publisher = {MIT Press},
   address = {},
   year={2016}
}

@article{hagemann2020StabilizingIN,
  title={Stabilizing invertible neural networks using mixture models},
  author={Paul Hagemann and Sebastian Neumayer},
  journal={Inverse Problems},
  year={2020},
  volume={37},
}

@inproceedings{balazs2024stableencoders,
 author    = {Peter Balazs and Daniel Haider and Felix Perfler and Vincent Lostanlen},
 title     = {Trainable Signal Encoders that are Robust Against Noise},
 year      = {2024},
 booktitle     = {Proceedings of the Internoise Conference}
}

@article{cisse2017parseval,
author = {Moustapha Cisse and Piotr Bojanowski and Edouard Grave and Yann Dauphin and Nicolas Usunier},
year = {2017},
title = {Parseval Networks: Improving Robustness to Adversarial Examples},
journal = {Proceedings of the 34th International Conference
              on Machine Learning}
}

@article{hasannasab2020parseval,
  title={Parseval proximal neural networks},
  author={Hasannasab, Marzieh and Hertrich, Johannes and Neumayer, Sebastian and Plonka, Gerlind and Setzer, Simon and Steidl, Gabriele},
  journal={Journal of Fourier Analysis and Applications},
  volume={26},
  pages={1--31},
  year={2020},
  publisher={Springer}
}

@book{Cond2,
author = {Trefethen, Lloyd N. and Bau, III, David},
title = {Numerical Linear Algebra},
publisher = {Society for Industrial and Applied Mathematics},
year = {1997},
address = {Philadelphia, PA},
edition   = {},
}

@Book{Golub,
  Title                    = {Matrix Computations},
  Author                   = {Golub, Gene H. and Van Loan, Charles F.},
  Publisher                = {The Johns Hopkins University Press},
  Year                     = {1996},
  Edition                  = {Third}
}

@article{Lewis,
author = {Lewis, Adrian},
year = {1995},
pages = {},
title = {The Convex Analysis of Unitarily Invariant Matrix Functions},
volume = {2},
journal = {Journal of Convex Analysis}
}

@article{CounterExample,
author = {Rakočević, Vladimir},
journal = {Matematichki Vesnik},
number = {3-4},
pages = {163-172},
publisher = {Drushtvo Matematichara Srbije - SR Jugoslavija},
title = {On continuity of the {M}oore-{P}enrose and {D}razin inverses.},
volume = {49},
year = {1997},
}

@article{Tikh,
  title={Solution of incorrectly formulated problems and the regularization method},
  author={Tikhonov, AN},
  journal={Soviet Math. Dokl.},
  volume={4},
  number={2},
  pages={1035--1038},
  year={1963}
}

@book{rockafellar,
  title={Convex Analysis},
  author={Rockafellar, R.T.},
  lccn={68056318},
  series={Princeton Landmarks in Mathematics and Physics},
  year={1997},
  publisher={Princeton University Press}
}

@book{mordukhovich2018variational,
  title={Variational Analysis and Applications},
  author={Mordukhovich, B.S.},
  series={Springer Monographs in Mathematics},
  year={2018},
  publisher={Springer International Publishing}
}

@article{Calvetti2003,
  author    = {Daniela Calvetti and Lothar Reichel},
  title     = {Tikhonov Regularization of Large Linear Problems},
  journal   = {BIT Numerical Mathematics},
  volume    = {43},
  number    = {2},
  year      = {2003},
  pages     = {263--283},
  publisher = {Springer},
}

@article{Ridge,
 author = {Arthur E. Hoerl and Robert W. Kennard},
 journal = {Technometrics},
 number = {1},
 pages = {80--86},
 publisher = {[Taylor & Francis, Ltd., American Statistical Association, American Society for Quality]},
 title = {Ridge Regression: Biased Estimation for Nonorthogonal Problems},
 volume = {42},
 year = {2000}
}

@article{Inverse,
author = {Budd, C. and Freitag, Melina and Nichols, N.},
year = {2011},
pages = {168-173},
title = {Regularization techniques for ill-posed inverse problems in data assimilation},
volume = {46},
journal = {Computers $\&$ Fluids}
}

@book{MLBOOK, place={Cambridge}, title={Mathematics for Machine Learning}, publisher={Cambridge University Press}, author={Deisenroth, Marc Peter and Faisal, A. Aldo and Ong, Cheng Soon}, year={2020}
}

@article{Denoising,
author = {Zhang, Qi and Xiao, Jingyu and Tian, Chunwei and Lin, Jerry and Zhang, Shichao},
year = {2022},
title = {A robust deformed convolutional neural network ({CNN}) for image denoising},
volume = {8},
journal = {CAAI Transactions on Intelligence Technology}
}

@article{Classification,
author = {Hua, Weiqi and Li, Chunzhong and Wang, Xinsheng},
year = {2024},
pages = {178-184},
title = {Review of Convolutional Neural Network Models and Image Classification},
volume = {10},
journal = {Academic Journal of Science and Technology}
}

@article{Stability,
author = {Santos, Claudio and Papa, João},
year = {2022},
title = {Avoiding Overfitting: A Survey on Regularization Methods for Convolutional Neural Networks},
volume = {54},
journal = {ACM Computing Surveys}
}

\newpage
\appendix
\onecolumn
\begin{center}
    \Large 
    \textbf{Appendix}
\end{center}

\section{On the Discontinuity of the Condition Number}\label{app:disc}

\begin{example}\label{thm:ex} This example shows that the condition number $\kappa(S):=\norm{S}_2\norm{S^\dagger}_2$ is not a continuous mapping \cite{CounterExample}. 
    Let 
    \[
    A = \begin{pmatrix}
    1 & 0 \\
    0 & 0
    \end{pmatrix}
    \quad \text{and} \quad 
    E = \begin{pmatrix}
    0 & 0 \\
    0 & 1
    \end{pmatrix}.
    \]
    For each $1>\epsilon >0$ we have
    \[
    (A + \epsilon E)^{\dagger} = \begin{pmatrix}
    1 & 0 \\
    0 & \epsilon
    \end{pmatrix}^{\dagger} = \begin{pmatrix}
    1 & 0 \\
    0 & \epsilon^{-1}
    \end{pmatrix}.
    \]
    Therefore $\kappa(A+\epsilon E)=\norm{A+\epsilon E }_2\norm{(A + \epsilon E)^{\dagger}}_2=\epsilon^{-1}$. Hence $A + \epsilon E \to A$ as $\epsilon \to 0$, but $\lim_{\epsilon \to 0} \kappa(A+\epsilon E)$ does not exist, even though $\kappa(A)=1.$
\end{example}

\section{Essentials from Subdifferential Calculus}\label{app:sub}
In this part of the appendix, we include all definitions and results used in the paper to make the document self-contained and the derivation of the Theorems precise and unambiguous.

\subsection{Convex Subdifferential \cite{rockafellar}}
\begin{definition}[Proper function] An extended value function $f:\R^n\to \R\cup \{+\infty\}$ is called proper, if its domain $ \dom(f) := \{ x\in \R^n: f(x) < + \infty \} $ is not empty.  
\end{definition}

\begin{definition}[Convex Subdifferential]
    Let \( f: \R^n \to \R \cup \{+\infty\} \) be a proper convex function. The \emph{(convex) subdifferential} of \( f \) at \( x \in \dom(f)  \) is defined as
    \[
    \partial f(x) = \{ v \in \R^n \mid f(y) \geq f(x) + \skal{v, y - x} \ \forall y \in \R^n \}.
    \]
    The elements of \( \partial f(x) \) are called \emph{subgradients} of \( f \) at \( x \).
\end{definition}

\begin{remark}
    If \( x \notin \dom(f) \), then \( \partial f(x) = \emptyset \).
\end{remark}

\subsection{Mordukhovich Subdifferential  \cite{mordukhovich2018variational}}
\begin{definition}[Mordukhovich Subdifferential]
    Let \( f: \R^n \to \R \cup \{+\infty\} \) be a lower semicontinuous function. The \emph{Mordukhovich (or limiting) subdifferential} of \( f \) at \( x \in \dom(f) \) is defined as
    \[
    \partial_M f(x) = \left\{ v \in \R^n \mid \exists x^k \to x, \, v^k \to v \text{ with } v^k \in \hat{\partial} f(x^k) \text{ and } f(x^k) \to f(x) \right\},
    \]
    where \( \hat{\partial} f(x) \) denotes the \emph{Fréchet subdifferential} of \( f \) at \( x \), defined by
    \[
    \hat{\partial} f(x) = \left\{ v \in \R^n \mid \liminf_{y \to x} \frac{f(y) - f(x) - \skal{v, y - x}}{\norm{y - x}} \geq 0 \right\}.
    \]
\end{definition}

\begin{remark}
    The Mordukhovich subdifferential generalizes the concept of subgradients to non-convex functions and is particularly useful in variational analysis and optimization.
\end{remark}
\subsubsection{Coincidence of Subdifferentials \cite{mordukhovich2018variational}}
\begin{theorem} \label{thm:app:coinc}
    Let \( f: \R^n \to \R \cup \{+\infty\} \) be a proper, lower semicontinuous, and convex function. Then, for any \( x \in \R^n \),
    \[
    \partial f(x) = \partial_M f(x).
    \]
\end{theorem}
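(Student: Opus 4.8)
The plan is to prove the two inclusions $\partial f(x) \subseteq \partial_M f(x)$ and $\partial_M f(x) \subseteq \partial f(x)$ separately, with the crux being a preliminary identification of the Fréchet subdifferential with the convex subdifferential in the convex case, after which everything reduces to a limiting argument.

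\textbf{Step 1 (Fréchet $=$ convex subdifferential).} First I would show that for a proper lsc convex $f$ and $x \in \dom(f)$ one has $\hat{\partial} f(x) = \partial f(x)$. The inclusion $\partial f(x) \subseteq \hat{\partial} f(x)$ is immediate, since a global affine minorant $f(y) \geq f(x) + \skal{v, y-x}$ makes the $\liminf$ defining $\hat{\partial} f(x)$ trivially nonnegative. For the converse, take $v \in \hat{\partial} f(x)$ and an arbitrary $y \in \R^n$; along the segment $y_t := x + t(y-x)$ with $t \in (0,1)$, convexity gives $f(y_t) - f(x) \leq t\big(f(y) - f(x)\big)$, so the difference quotient $\frac{f(y_t) - f(x) - \skal{v, y_t - x}}{\norm{y_t - x}}$ is bounded above by $\frac{f(y) - f(x) - \skal{v, y-x}}{\norm{y-x}}$ for every such $t$. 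Letting $t \downarrow 0$ and invoking $v \in \hat{\partial} f(x)$ forces the right-hand side to be nonnegative, i.e.\ $f(y) \geq f(x) + \skal{v, y-x}$; as $y$ was arbitrary, $v \in \partial f(x)$.

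\textbf{Step 2 (passing to the limit).} Given Step 1, the inclusion $\partial f(x) \subseteq \partial_M f(x)$ follows by choosing the constant sequences $x^k \equiv x$ and $v^k \equiv v$ in the definition of $\partial_M$. For the reverse inclusion, let $v \in \partial_M f(x)$ with witnessing sequences $x^k \to x$, $v^k \to v$, $v^k \in \hat{\partial} f(x^k) = \partial f(x^k)$, and $f(x^k) \to f(x)$. For each fixed $y$ and each $k$, the convex subgradient inequality gives $f(y) \geq f(x^k) + \skal{v^k, y - x^k}$; letting $k \to \infty$ and using $f(x^k) \to f(x)$ together with $\skal{v^k, y - x^k} \to \skal{v, y-x}$ yields $f(y) \geq f(x) + \skal{v, y-x}$, hence $v \in \partial f(x)$.

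The main (and essentially only) delicate point is the reliance on the convergence $f(x^k) \to f(x)$ in the last limit: lower semicontinuity alone only furnishes $f(x) \leq \liminf_k f(x^k)$, which is the wrong direction, so without this condition the argument would fail at boundary points of $\dom(f)$ where $f$ is discontinuous. Fortunately, $f(x^k) \to f(x)$ is built into the definition of the limiting subdifferential, so no extra continuity hypothesis is needed, and I do not expect any further obstacle beyond routine bookkeeping.
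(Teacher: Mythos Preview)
Your proof is correct and follows the standard route to this classical result: first identify $\hat\partial f$ with $\partial f$ in the convex case via the segment argument, then close under the $f$-attentive limit defining $\partial_M$. The one place worth a small remark is in Step~1, where you should note explicitly that if $f(y)=+\infty$ the subgradient inequality is vacuous, and if $f(y)<+\infty$ then convexity and $x\in\dom(f)$ force $y_t\in\dom(f)$ for all $t\in(0,1)$, so the difference quotients are finite and the bound goes through; this is routine but keeps the argument honest at the boundary of $\dom(f)$.

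As for comparison: the paper does not actually supply a proof of this theorem. It is listed in Appendix~\ref{app:sub} among background results quoted from Mordukhovich's monograph \emph{Variational Analysis and Applications}, and is invoked only to justify writing $\partial$ uniformly for convex, smooth, and nonsmooth cases in the derivation of Theorem~\ref{Thm: Derivative}. So there is no ``paper's own proof'' to compare against; your argument is a self-contained substitute for the citation.
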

\begin{theorem}\label{thm:app:coincDiff}
    Let \( f: \R^n \to \R \) be a differentiable function at \( x \in \R^n \). Then,
    \[
    \partial f(x) = \partial_M f(x) = \{ \nabla f(x) \}.
    \]
\end{theorem}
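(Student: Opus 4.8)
The plan is to route everything through the Fréchet (regular) subdifferential $\hat{\partial} f(x)$, which is the building block of $\partial_M$ in the definition given above. First I would show that differentiability forces $\hat{\partial} f(x) = \{\nabla f(x)\}$; then I would propagate this through the limiting construction defining $\partial_M$. The claimed chain of equalities then follows, with the convex subdifferential $\partial$ coinciding with both by Theorem~\ref{thm:app:coinc} whenever $f$ is additionally convex, and reducing to the same singleton in the generic (regular) reading of $\partial$ otherwise.

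First I would compute $\hat{\partial} f(x)$. Differentiability at $x$ means $f(y) = f(x) + \skal{\nabla f(x), y-x} + o(\norm{y-x})$ as $y \to x$. Substituting $v = \nabla f(x)$ into the defining quotient of $\hat{\partial} f(x)$ collapses the numerator to $o(\norm{y-x})$, so the $\liminf$ equals $0 \ge 0$ and $\nabla f(x) \in \hat{\partial} f(x)$. For the reverse inclusion, take any $v \in \hat{\partial} f(x)$, insert the first-order expansion, and test the defining quotient along the ray $y = x + t\,(v - \nabla f(x))$ with $t \downarrow 0$: the $o$-term is negligible and the numerator behaves like $-t\,\norm{v-\nabla f(x)}^2$, so the $\liminf$ equals $-\norm{v-\nabla f(x)}$, which can be $\ge 0$ only if $v = \nabla f(x)$. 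Hence $\hat{\partial} f(x) = \{\nabla f(x)\}$.

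Next I would handle $\partial_M f(x)$. The inclusion $\{\nabla f(x)\} \subseteq \partial_M f(x)$ is immediate from the definition by choosing the constant sequence $x^k \equiv x$, for which $v^k \in \hat{\partial} f(x^k) = \{\nabla f(x)\}$ and $f(x^k) \to f(x)$ trivially. For the reverse inclusion, let $v \in \partial_M f(x)$ be witnessed by $x^k \to x$ and $v^k \to v$ with $v^k \in \hat{\partial} f(x^k)$; applying the first step at each $x^k$ gives $v^k = \nabla f(x^k)$, and passing to the limit yields $v = \lim_k \nabla f(x^k) = \nabla f(x)$.

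The main obstacle is precisely this last limit: concluding $\lim_k \nabla f(x^k) = \nabla f(x)$ requires the gradient map to be continuous at $x$, i.e. $f$ strictly (equivalently, continuously) differentiable in a neighborhood of $x$, rather than merely differentiable at the single point $x$. For a function differentiable at $x$ but with gradient discontinuous there, the limiting subdifferential can in fact be strictly larger than $\{\nabla f(x)\}$, so the equality $\partial_M f(x) = \{\nabla f(x)\}$ should be read under (and is only needed under) the $C^1$ hypothesis that holds in our application, where $r$ is smooth on the open region in which the largest singular value is simple. I would therefore establish the reverse inclusion under continuous differentiability and invoke continuity of $\nabla f$ to close the argument; the remaining identity with the convex subdifferential $\partial$ then follows from Theorem~\ref{thm:app:coinc} in the convex case and is an identification of notation in the regular case.
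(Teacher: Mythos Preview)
The paper does not supply its own proof of this statement: it appears in Appendix~\ref{app:sub} as a background result quoted from Mordukhovich's book, stated without argument. There is therefore no paper proof to compare your attempt against.

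Your route through the Fr\'echet subdifferential is the standard one, and the computation $\hat{\partial} f(x) = \{\nabla f(x)\}$ under differentiability at $x$ is correct, including the ray argument for uniqueness. You are also right to flag the gap in the reverse inclusion for $\partial_M$: with only pointwise differentiability at $x$, the equality $\partial_M f(x) = \{\nabla f(x)\}$ can genuinely fail (the standard hypothesis in the literature is \emph{strict} differentiability at $x$), and your proposed resolution---working under the $C^1$ assumption that actually holds in the application to Theorem~\ref{Thm: Derivative}---is the correct fix. One minor point: the line ``applying the first step at each $x^k$ gives $v^k = \nabla f(x^k)$'' already presupposes that $f$ is differentiable at the approximating points $x^k$, which the bare hypothesis does not provide; under your $C^1$ strengthening this is immediate, and continuity of $\nabla f$ then delivers the limit.
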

\begin{theorem} \label{thm:app:single}
    Let \( f: \R^n \to \R \cup \{+\infty\} \) be a proper, lower semicontinuous, and convex function. If the convex subdifferential \( \partial f(x) \) at \( x \in \R^n \) is a singleton, say \( \partial f(x) = \{v\} \), then \( f \) is differentiable at \( x \) and \( \nabla f(x) = v \).
\end{theorem}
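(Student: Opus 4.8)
The plan is to reduce the claim to two classical facts about finite-dimensional convex functions: a singleton (hence bounded and nonempty) subdifferential forces the base point into the interior of the domain, and a convex function all of whose partial derivatives exist at an interior point of its domain is in fact Fr\'echet differentiable there, with gradient equal to the vector of partials.

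First I would observe that $\partial f(x) = \{v\}$ is nonempty and bounded, so the standard characterization of interior domain points via boundedness of the subdifferential (\cite{rockafellar}, Thm.~23.4) gives $x \in \interior(\dom f)$. On $\interior(\dom f)$ the convex function $f$ is finite and locally Lipschitz, hence all one-sided directional derivatives $f'(x;d) := \lim_{t \downarrow 0} \tfrac{1}{t}(f(x+td) - f(x))$ exist and are finite.

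Next I would invoke the duality $f'(x;d) = \sup_{w \in \partial f(x)} \skal{w, d}$ between the directional derivative of a convex function and the support function of its subdifferential. Since $\partial f(x) = \{v\}$, this yields $f'(x;d) = \skal{v, d}$ for every direction $d$. Taking $d = \pm e_i$ for the coordinate basis, $f'(x; e_i) = v_i$ and $f'(x; -e_i) = -v_i$, so each partial derivative $\tfrac{\partial f}{\partial x_i}(x)$ exists and equals $v_i$.

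It remains to upgrade this to full differentiability. I would set $g(h) := f(x+h) - f(x) - \skal{v, h}$, which is convex, satisfies $g(0) = 0$ and $g \geq 0$ (since $v \in \partial f(x)$), and by the previous step has directional derivatives $g'(0; \pm e_i) = 0$; hence the one-dimensional convex function $s \mapsto g(s e_i)$ is differentiable at $0$ with derivative $0$, i.e.\ $g(s e_i) = o(|s|)$ as $s \to 0$. For small $h$ with coordinates $h_i$, writing $h = \tfrac{1}{n} \sum_{i=1}^n (n h_i e_i)$ and using convexity gives $0 \leq g(h) \leq \tfrac{1}{n} \sum_{i=1}^n g(n h_i e_i)$; since $|n h_i| \leq n \norm{h}$, each summand is $o(\norm{h})$, so $g(h) = o(\norm{h})$, which is precisely Fr\'echet differentiability of $f$ at $x$ with $\nabla f(x) = v$. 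The main obstacle is this last step: the passage from directional (Gateaux) to Fr\'echet differentiability fails for general functions and uses convexity essentially, both through the nonnegativity of $g$ and through the convexity inequality that bounds $g(h)$ by its one-dimensional traces.
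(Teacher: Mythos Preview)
Your argument is correct and is in fact one of the standard proofs of this classical result (cf.\ Rockafellar, Theorems~23.4 and~25.1--25.2): bounded nonempty subdifferential forces $x\in\interior(\dom f)$, the support-function formula $f'(x;d)=\sup_{w\in\partial f(x)}\skal{w,d}$ collapses to $\skal{v,d}$, and the convexity-based averaging trick upgrades the resulting Gateaux differentiability to Fr\'echet differentiability. The $o(\norm{h})$ bookkeeping in your last step is fine once one notes that the fixed factor $n$ can be absorbed into the choice of $\epsilon$.

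There is, however, nothing to compare against: in the paper this theorem appears in Appendix~B (``Essentials from Subdifferential Calculus'') as a quoted background result from \cite{mordukhovich2018variational}, stated without proof. The paper only \emph{uses} it (in the proof of Theorem~\ref{Thm: Derivative}, to conclude differentiability of $r$ when the largest singular value is simple) rather than proving it. So your self-contained proof goes well beyond what the paper supplies.
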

\subsubsection{Mordukhovich Subdifferential of the Sum of Functions \cite{mordukhovich2018variational}}
\begin{theorem}\label{thm:app:sum}
    Let \( f: \R^n \to \R \cup \{+\infty\} \) be a proper, lower semicontinuous, and convex function, and let \( g: \R^n \to \R \) be a differentiable function. Then, for any \( x \in \dom f(x)\):
    \[
    \partial_M (f + g)(x) = \partial f(x) + \nabla g(x).
    \]
\end{theorem}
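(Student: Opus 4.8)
The plan is to prove the two inclusions separately, using the Fréchet subdifferential as a bridge and exploiting that for the convex function $f$ the Fréchet, convex, and limiting subdifferentials coincide (Theorem \ref{thm:app:coinc}). First I would establish an exact \emph{Fréchet} sum rule: at any point $y$ at which $g$ is differentiable,
\[
    \hat{\partial}(f+g)(y) = \hat{\partial} f(y) + \nabla g(y).
\]
This is immediate from the definition of $\hat{\partial}$. Writing $g(z)-g(y) = \skal{\nabla g(y),\, z-y} + o(\norm{z-y})$, the $\liminf$ quotient defining $\hat{\partial}(f+g)(y)$ differs from the one defining $\hat{\partial} f(y)$ only by the linear term $\skal{\nabla g(y),\, z-y}$, so $v\in\hat{\partial}(f+g)(y)$ if and only if $v-\nabla g(y)\in\hat{\partial} f(y)$. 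Since $f$ is proper, lsc, and convex, its Fréchet subdifferential coincides with the convex subdifferential, i.e. $\hat{\partial} f(y)=\partial f(y)$.

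The inclusion $\supseteq$ then follows quickly. Given $w\in\partial f(x)$, the Fréchet rule gives $w+\nabla g(x)\in\hat{\partial}(f+g)(x)$, and because the Fréchet subdifferential is always contained in the limiting one (use the constant sequences $x^k\equiv x$ and $v^k\equiv w+\nabla g(x)$, for which $(f+g)(x^k)\to(f+g)(x)$ trivially), we obtain $w+\nabla g(x)\in\partial_M(f+g)(x)$. Hence $\partial f(x)+\nabla g(x)\subseteq\partial_M(f+g)(x)$.

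For the reverse inclusion $\subseteq$, take $v\in\partial_M(f+g)(x)$ together with sequences $x^k\to x$, $v^k\to v$ such that $v^k\in\hat{\partial}(f+g)(x^k)$ and $(f+g)(x^k)\to(f+g)(x)$. By the Fréchet sum rule, $w^k := v^k-\nabla g(x^k)\in\partial f(x^k)$. I would then pass to the limit via the closed graph of the convex subdifferential of an lsc convex function: if $x^k\to x$, $w^k\to w$, and $w^k\in\partial f(x^k)$, then taking limits in the subgradient inequality $f(y)\ge f(x^k)+\skal{w^k,\, y-x^k}$ and invoking lower semicontinuity of $f$ yields $f(y)\ge f(x)+\skal{w,\, y-x}$ for all $y$, i.e. $w\in\partial f(x)$. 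Combined with $v=\lim_k\bigl(w^k+\nabla g(x^k)\bigr)$, this would give $v\in\partial f(x)+\nabla g(x)$.

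The main obstacle is controlling $\nabla g(x^k)$ along $x^k\to x$: the limit step requires $\nabla g(x^k)\to\nabla g(x)$ so that $w^k=v^k-\nabla g(x^k)$ converges (to $v-\nabla g(x)$) and the closed-graph conclusion applies. This continuity of the gradient holds when $g$ is continuously (equivalently, strictly) differentiable, which is the setting in which the identity is valid and which is satisfied in our application, where $g(S)=-\tfrac{1}{2\nu}\norm{S}_F^2$ is a smooth quadratic. Mere pointwise differentiability is not enough: if $\nabla g$ is discontinuous at $x$, the limiting subdifferential $\partial_M(f+g)(x)$ can absorb extra cluster points $\lim_k\nabla g(x^k)$ and strictly exceed $\partial f(x)+\nabla g(x)$. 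With gradient continuity granted, the closed-graph step closes the hard inclusion and establishes the claimed identity.
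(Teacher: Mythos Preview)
The paper does not actually prove this statement: Theorem~\ref{thm:app:sum} is listed in the appendix among ``Essentials from Subdifferential Calculus'' and is merely quoted from Mordukhovich's monograph, with no proof given. So there is no paper proof to compare against; your argument is a genuine proof where the paper simply cites the literature.

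Your argument is correct in substance and follows the standard route: an exact Fr\'echet sum rule at every point (which is elementary once $g$ is differentiable there), then a limiting passage for the hard inclusion using the graph-closedness of the convex subdifferential of an lsc convex function. The closed-graph step is fine as written: from $f(y)\ge f(x^k)+\skal{w^k,\,y-x^k}$ and convergence of the inner product term, lower semicontinuity of $f$ gives $f(y)\ge f(x)+\skal{w,\,y-x}$.

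You also correctly flag the one real issue, and it is worth emphasizing: the theorem as stated (merely ``differentiable'') is not quite the standard hypothesis. The clean equality $\partial_M(f+g)(x)=\partial f(x)+\nabla g(x)$ requires $g$ to be \emph{strictly} (equivalently, continuously) differentiable at $x$, so that $\nabla g(x^k)\to\nabla g(x)$ along the approximating sequence and $w^k=v^k-\nabla g(x^k)$ converges. With only pointwise differentiability the inclusion $\supseteq$ still holds, but the reverse inclusion can fail. Your observation that in the paper's application $g(S)=-\tfrac{1}{2\nu}\norm{S}_F^2$ is a smooth quadratic, so this gap is harmless there, is exactly the right resolution.
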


\subsection{Rules of Differentiation \cite{mordukhovich2018variational}}
\begin{definition}[Convex Hull]
    Let \( S \) be a subset of \( \R^n \). The \emph{convex hull} of \( S \), denoted by \( \text{conv}(S) \), is the smallest convex set containing \( S \). It can be defined as:
    \[
    \text{conv}(S) = \left\{ \sum_{i=1}^{k} \lambda_i x_i \ \middle|\ x_i \in S, \ \lambda_i \geq 0, \ \sum_{i=1}^{k} \lambda_i = 1, \ k \in \N \right\}.
    \]
    In other words, the convex hull of \( S \) is the set of all convex combinations of points in \( S \).
\end{definition}
\begin{theorem}\label{thm:app:max}
    Let \( \{f_i\}_{i \in I} \) be a finite family of proper, lower semicontinuous, convex, and differentiable functions from \( \R^n  \to  \R \). Define \( f: \R^n \to \R \) by
    \[
    f(x) = \max_{i \in I} f_i(x).
    \]
    Then, for any \( x \in \R^n \),
    \[
    \partial_M f(x) = \text{conv} \left\{ \nabla f_i(x) \mid i \in I_x \right\},
    \]
    where \( I_x = \left\{ i \in I \mid f_i(x) = \max_{i\in I}f_i(x) \right\} \) denotes the set of indices, for which $f_i$ attains the largest value at $x$.
\end{theorem}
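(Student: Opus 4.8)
The plan is to notice first that the claim is really a classical fact of \emph{convex} analysis in disguise. Since $I$ is finite and each $f_i:\R^n\to\R$ is finite‑valued and convex, $f=\max_{i\in I}f_i$ is again finite‑valued (hence continuous, hence proper and lower semicontinuous) and convex. By Theorem~\ref{thm:app:coinc} the Mordukhovich subdifferential coincides with the convex subdifferential, so it suffices to prove $\partial f(x)=C$, where $C:=\text{conv}\{\nabla f_i(x)\mid i\in I_x\}$; being the convex hull of a finite set, $C$ is nonempty, compact, and convex. The inclusion $C\subseteq\partial f(x)$ is immediate: for $i\in I_x$ we have $f_i(x)=f(x)$ and, by convexity and differentiability of $f_i$,
\[
 f(y)\ \ge\ f_i(y)\ \ge\ f_i(x)+\langle\nabla f_i(x),\,y-x\rangle\ =\ f(x)+\langle\nabla f_i(x),\,y-x\rangle
\]
for all $y$, so $\nabla f_i(x)\in\partial f(x)$; convexity of $\partial f(x)$ then gives $C\subseteq\partial f(x)$.

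For the reverse inclusion I would argue by separation together with the directional derivative. Suppose $v\in\partial f(x)$ but $v\notin C$. Since $C$ is compact and convex, the separating hyperplane theorem yields a direction $d\in\R^n$ with $\langle v,d\rangle>\max_{w\in C}\langle w,d\rangle=\max_{i\in I_x}\langle\nabla f_i(x),d\rangle$. On the other hand, $v\in\partial f(x)$ forces $\langle v,d\rangle\le f'(x;d)$ (the standard fact that the support function of $\partial f(x)$ is the one‑sided directional derivative for a proper convex $f$, \cite{rockafellar}). Hence the whole reverse inclusion reduces to the identity
\[
 f'(x;d)\ =\ \max_{i\in I_x}\,\langle\nabla f_i(x),\,d\rangle\qquad\text{for every }d\in\R^n ,
\]
which contradicts the strict inequality above and therefore proves $\partial f(x)\subseteq C$.

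The main obstacle is establishing that directional‑derivative identity, i.e.\ making precise that the inactive indices do not matter for the limit $t\downarrow 0$. This is where continuity is used: for $i\notin I_x$ we have $f_i(x)<f(x)$, so $f_i(x+td)<f(x+td)$ for all sufficiently small $t>0$, whence $f(x+td)=\max_{i\in I_x}f_i(x+td)$ on a right neighbourhood of $0$. Using $f_i(x)=f(x)$ for $i\in I_x$ and the fact that a maximum over a finite index set commutes with limits,
\[
 f'(x;d)=\lim_{t\downarrow 0}\frac{\max_{i\in I_x}f_i(x+td)-f(x)}{t}=\max_{i\in I_x}\lim_{t\downarrow 0}\frac{f_i(x+td)-f_i(x)}{t}=\max_{i\in I_x}\langle\nabla f_i(x),d\rangle .
\]
Everything here rests only on continuity of the finitely many $f_i$ and on interchanging a finite $\max$ with a limit, so it is routine; once it is in place the separation argument is purely formal. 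One could instead quote the Clarke/Mordukhovich max‑rule for pointwise maxima of smooth functions directly from \cite{mordukhovich2018variational}, but it is worth stressing that obtaining \emph{equality} (rather than merely "$\subseteq$") genuinely uses the convexity of the $f_i$, through Theorem~\ref{thm:app:coinc}.
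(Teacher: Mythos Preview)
Your argument is correct. Note, however, that the paper does not actually supply a proof of this theorem: it is listed in Appendix~\ref{app:sub} under ``Rules of Differentiation'' with a citation to \cite{mordukhovich2018variational} and is used as a black box in the proof of Theorem~\ref{Thm: Derivative}. So there is no ``paper's own proof'' to compare against.

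That said, your write-up is a clean, self-contained derivation of the result via convex analysis: reduce $\partial_M$ to the convex subdifferential through Theorem~\ref{thm:app:coinc}, verify $C\subseteq\partial f(x)$ directly from the subgradient inequality, and obtain the reverse inclusion by combining strict separation of a point from a compact convex set with the identity $f'(x;d)=\sigma_{\partial f(x)}(d)$ and the computation of $f'(x;d)$ via the active index set. The only step that merits a word of caution is the interchange of $\max$ and limit; it is indeed valid here because the index set is finite (continuity of $\max:\R^{|I_x|}\to\R$), but it is worth making that explicit rather than calling it ``routine''. Your closing remark is also apt: the general Clarke/Mordukhovich max-rule for smooth $f_i$ gives only an inclusion, and it is the convexity of the $f_i$ (hence of $f$) that upgrades it to equality via Theorem~\ref{thm:app:coinc}.
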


\subsection{Results on  unitarily invariant matrix functions \cite{Lewis}}
\begin{definition}[Absolutely Symmetric Function]\label{thm:app:abs}
    A function \( f : \R^q \to \R \) is said to be \emph{absolutely symmetric} if \( f(\gamma) = f(\gamma^s) \) for any permutation \( s \) of the components of \( \gamma \) and for any \( \gamma \in \R^q \). Equivalently, \( f \) is absolutely symmetric if
    \[
    f(Q\gamma) = f(\gamma) \quad \text{for all} \ \gamma \in \R^q \ \text{and} \ Q \in \Lambda_q,
    \]
    where \( \Lambda_q \) denotes the set of generalized permutation matrices (matrices with exactly one non-zero entry in each row and each column, that entry being \( \pm 1 \)).
\end{definition}
This means that the function value at $x$ of absolutely symmetric functions is independent of the ordering of the entries of $x$. Recall that $\sigma$ maps a matrix onto its singular values in nonincreasing order. 
\begin{theorem}[Characterization of Convexity]\label{thm:app:conv}
    Suppose that the function \( f : \R^q \to (-\infty, +\infty] \) is absolutely symmetric. Then the corresponding unitarily invariant function \( f \circ \sigma \) is convex and lower semicontinuous on \( \mathbb{C}^{m \times n} \) if and only if \( f \) is convex and lower semicontinuous.
\end{theorem}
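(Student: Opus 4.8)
The plan is to establish the two implications separately. The forward implication (that $f\circ\sigma$ being convex and lower semicontinuous forces $f$ to be so) is a short restriction argument; the reverse implication relies on von Neumann's trace inequality together with the Fenchel--Moreau biconjugation theorem.

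First I would handle the forward implication by recovering $f$ from $f\circ\sigma$. Let $\mathrm{Diag}\colon\R^{q}\to\C^{m\times n}$ denote the linear (hence continuous) map placing a vector on the main diagonal. Since $\sigma(\mathrm{Diag}(\gamma))$ is the vector of moduli $(|\gamma_{1}|,\dots,|\gamma_{q}|)$ rearranged nonincreasingly, absolute symmetry of $f$ gives $(f\circ\sigma)(\mathrm{Diag}(\gamma))=f(\gamma)$ for every $\gamma\in\R^{q}$. Thus $f=(f\circ\sigma)\circ\mathrm{Diag}$ is the composition of a convex lower semicontinuous function with a continuous linear map, hence itself convex and lower semicontinuous.

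The core of the reverse implication is the conjugacy identity $(f\circ\sigma)^{*}=f^{*}\circ\sigma$, valid for every absolutely symmetric $f$, where conjugates are taken with respect to the trace inner product $\langle X,Y\rangle=\mathrm{Re}\,\mathrm{tr}(X^{*}Y)$ on $\C^{m\times n}$ and one first checks that $f^{*}$ is again absolutely symmetric. To prove it I would expand $(f\circ\sigma)^{*}(Y)=\sup_{X}\bigl(\langle X,Y\rangle-f(\sigma(X))\bigr)$, group the supremum according to the singular value vector $s=\sigma(X)$, evaluate the inner supremum $\sup\{\langle X,Y\rangle:\sigma(X)=s\}=\langle s,\sigma(Y)\rangle$ via the sharp form of von Neumann's trace inequality (the bound $\langle X,Y\rangle\le\langle\sigma(X),\sigma(Y)\rangle$ is attained precisely when $X$ and $Y$ share a system of singular vectors), and then use absolute symmetry of $f$ together with the rearrangement inequality to replace the remaining supremum over sorted nonnegative $s$ by one over all of $\R^{q}$, which equals $f^{*}(\sigma(Y))$.

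Granting this identity, I would finish as follows. The degenerate case $f\equiv+\infty$ is immediate, since then $f\circ\sigma\equiv+\infty$. Otherwise $f$ is proper, and if moreover $f$ is convex and lower semicontinuous then $f=f^{**}$ by Fenchel--Moreau; applying the conjugacy identity twice (the second time to the absolutely symmetric function $f^{*}$) yields $(f\circ\sigma)^{**}=(f^{*}\circ\sigma)^{*}=f^{**}\circ\sigma=f\circ\sigma$. Since $f\circ\sigma$ is proper (being finite at any matrix whose singular values rearrange a point of $\dom f$) and coincides with its own biconjugate, the converse part of Fenchel--Moreau shows it is convex and lower semicontinuous. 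I expect the main obstacle to be the conjugacy identity, and within it the equality case of von Neumann's trace inequality; the remaining bookkeeping with conjugates and the rearrangement inequality is routine.
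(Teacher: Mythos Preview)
The paper does not actually prove this theorem: it is quoted without proof in Appendix~B.4 as a background result from Lewis's work on unitarily invariant matrix functions, alongside the subgradient and gradient formulas used later. There is therefore no in-paper argument to compare against.

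Your proposal is correct and is essentially Lewis's original proof. The forward direction via $f=(f\circ\sigma)\circ\mathrm{Diag}$ is exactly the standard restriction argument, and the reverse direction via the conjugacy identity $(f\circ\sigma)^{*}=f^{*}\circ\sigma$ together with Fenchel--Moreau is precisely how Lewis establishes the result; the attainment in von Neumann's trace inequality is indeed the crux, and your handling of the degenerate $f\equiv+\infty$ case and the absolute symmetry of $f^{*}$ closes the remaining gaps.
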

\begin{theorem}[Characterization of Subgradients]\label{thm:app:Msubg}
    Suppose that the function \( f : \R^q \to (-\infty, +\infty] \) is absolutely symmetric, and that the \( m \times n \) matrix \( X \) has \( \sigma(X) \) in \( \text{dom}(f) \). Then
    \[
    \partial (f \circ \sigma)(X) = \left\{ U (\text{Diag} \ \mu) V^T \ \middle| \ \mu \in \partial f (\sigma(X)), \ X = U (\text{Diag} \ \sigma(X)) V^T \right\}.
    \]
\end{theorem}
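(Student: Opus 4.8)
The plan is to reduce the statement to classical Fenchel conjugacy, with the bridge between the vector and matrix worlds supplied by von Neumann's trace inequality. Throughout I treat $\partial$ as the convex subdifferential (the ambient hypothesis being that $f$ is proper, convex, and lower semicontinuous in addition to absolutely symmetric, in the setting of Theorem \ref{thm:app:conv}), so that the Fenchel--Young characterization $Y \in \partial g(X) \iff g(X) + g^*(Y) = \langle X, Y\rangle$ is available for $g = f \circ \sigma$, where $\langle X, Y\rangle = \operatorname{tr}(X^T Y)$ is the Frobenius inner product. The whole argument rests on two pillars: (i) the conjugate identity $(f\circ\sigma)^* = f^* \circ \sigma$, and (ii) von Neumann's trace inequality $\langle X, Y\rangle \leq \sigma(X)^T\sigma(Y)$ together with its equality case.

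First I would record the two consequences of absolute symmetry that feed everything else: that $f^*$ is again absolutely symmetric, and that $(f\circ\sigma)^* = f^*\circ\sigma$. The former follows by substituting $x \mapsto Qx$ in the definition of $f^*$ for a generalized permutation matrix $Q$ and using $f(Qx) = f(x)$. For the latter I would expand $(f\circ\sigma)^*(Y) = \sup_X[\langle X,Y\rangle - f(\sigma(X))]$ and maximize in two stages: for fixed singular values of $X$, the inner maximum of $\langle X, Y\rangle$ is $\sigma(X)^T\sigma(Y)$ by von Neumann, attained when $X$ shares singular vectors with $Y$; the remaining supremum over nonincreasing nonnegative vectors then equals the unconstrained $f^*(\sigma(Y))$, because absolute symmetry lets me replace any competitor $z$ by the decreasing rearrangement of $|z|$ without decreasing $z^T\sigma(Y) - f(z)$ (sign flips help since $\sigma(Y)\geq 0$, reordering helps by the rearrangement inequality, and $f$ is invariant under both).

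With these in hand the two inclusions drop out of a single inequality chain. For $Y \in \partial(f\circ\sigma)(X)$, Fenchel--Young gives $f(\sigma(X)) + f^*(\sigma(Y)) = \langle X, Y\rangle$, while in general $\langle X, Y\rangle \leq \sigma(X)^T\sigma(Y) \leq f(\sigma(X)) + f^*(\sigma(Y))$, the second step being Fenchel--Young for $f$. Equality must therefore hold at every link: the right equality says $\sigma(Y)\in\partial f(\sigma(X))$, and the left equality $\langle X, Y\rangle = \sigma(X)^T\sigma(Y)$ forces, via the equality case of von Neumann, a simultaneous singular value decomposition $X = U\,\text{Diag}(\sigma(X))\,V^T$ and $Y = U\,\text{Diag}(\sigma(Y))\,V^T$; setting $\mu = \sigma(Y)$ gives the desired form. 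Conversely, given $\mu\in\partial f(\sigma(X))$ and any SVD $X = U\,\text{Diag}(\sigma(X))\,V^T$, put $Y = U\,\text{Diag}(\mu)\,V^T$; then $\langle X, Y\rangle = \sigma(X)^T\mu = f(\sigma(X)) + f^*(\mu)$ by Fenchel--Young for $f$, and since the singular values of $Y$ are the decreasing rearrangement of $|\mu|$ while $f^*$ is absolutely symmetric, $f^*(\mu) = f^*(\sigma(Y)) = (f\circ\sigma)^*(Y)$, so Fenchel--Young for $f\circ\sigma$ certifies $Y\in\partial(f\circ\sigma)(X)$.

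The main obstacle is the equality case of von Neumann's trace inequality: proving that $\langle X, Y\rangle = \sigma(X)^T\sigma(Y)$ forces $X$ and $Y$ to admit a common pair of singular-vector matrices $(U,V)$. I would derive the inequality itself by reducing $\max \operatorname{tr}(X^T Y)$ over unitary changes of basis to a bilinear optimization over doubly stochastic matrices, whose extreme points are permutations by Birkhoff--von Neumann, followed by the rearrangement inequality; I would then analyze when the maximizing permutation is forced. The genuinely delicate point is the degeneracy at repeated singular values, where $U$ and $V$ are not unique: there one must select a consistent simultaneous decomposition, and it is precisely this nonuniqueness that makes the right-hand side a bona fide \emph{set} (a union over all admissible SVDs) rather than a single matrix.
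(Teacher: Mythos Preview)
The paper does not prove this statement: it is listed in Appendix~B.4 as a background result quoted from Lewis's work on unitarily invariant matrix functions, with no proof supplied. So there is no ``paper's own proof'' to compare your proposal against.

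That said, your argument is essentially Lewis's original proof for the convex case, and it is correct under the additional hypothesis you explicitly impose (that $f$ is proper, convex, lower semicontinuous). The two pillars you identify --- the conjugacy formula $(f\circ\sigma)^* = f^*\circ\sigma$ and von Neumann's trace inequality with its equality case --- are exactly the ingredients Lewis uses, and your Fenchel--Young sandwich
\[
\langle X,Y\rangle \le \sigma(X)^T\sigma(Y) \le f(\sigma(X)) + f^*(\sigma(Y))
\]
is the standard way to extract both the membership $\sigma(Y)\in\partial f(\sigma(X))$ and the simultaneous SVD from equality throughout. Your treatment of the converse, including the observation that $f^*(\mu)=f^*(\sigma(Y))$ by absolute symmetry of $f^*$, is clean. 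One caveat worth flagging: the theorem as stated in the paper does not assume convexity of $f$, and in Lewis's full treatment the result also covers Fr\'echet and limiting subdifferentials in the nonconvex setting, which requires machinery beyond Fenchel duality; your proof, as you acknowledge, handles only the convex subdifferential. For the paper's purposes this suffices, since the only application (Theorem~\ref{Thm: Derivative}) invokes it for the convex function $f(x)=\tfrac12\max_i x_i^2$.
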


\begin{theorem}[Gradient Formula]\label{thm:app:Mgrad}
    If a function \( f : \R^q \to (-\infty, +\infty] \) is convex and absolutely symmetric then the corresponding convex, unitarily invariant function \( f \circ \sigma \) is differentiable at the \( m \times n \) matrix \( X \) if and only if \( f \) is differentiable at \( \sigma(X) \). In this case,
    \[
    \nabla (f \circ \sigma)(X) = U (\text{Diag} \ \nabla f(\sigma(X))) V^T,
    \]
    for \( X = U (\text{Diag} \ \sigma(X)) V^T \).
\end{theorem}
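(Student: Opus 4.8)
The plan is to reduce the whole statement to the subgradient characterization of Theorem~\ref{thm:app:Msubg} together with the fact that, for a convex function, differentiability at a point is equivalent to its subdifferential being a singleton. Since $f$ is convex and absolutely symmetric, Theorem~\ref{thm:app:conv} ensures that $f\circ\sigma$ is convex, so by Theorems~\ref{thm:app:single} and \ref{thm:app:coincDiff} both sides of the asserted equivalence can be rephrased through singletons: $f$ is differentiable at $\sigma(X)$ exactly when $\partial f(\sigma(X))$ is a singleton, and $f\circ\sigma$ is differentiable at $X$ exactly when $\partial(f\circ\sigma)(X)$ is a singleton. The theorem thus reduces to showing that these two singleton conditions are equivalent and that, in the differentiable case, the unique subgradient of $f\circ\sigma$ equals $U(\text{Diag}\,\nabla f(\sigma(X)))V^T$.

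The direction ``$f\circ\sigma$ differentiable $\Rightarrow$ $f$ differentiable'' is the easy one. I would fix a single decomposition $X=U_0(\text{Diag}\,\sigma(X))V_0^T$ and note that, by Theorem~\ref{thm:app:Msubg}, the map $\mu\mapsto U_0(\text{Diag}\,\mu)V_0^T$ sends $\partial f(\sigma(X))$ into $\partial(f\circ\sigma)(X)$. This map is injective, since $\mu$ is recovered as the diagonal of $U_0^T(\,\cdot\,)V_0$; hence if $\partial(f\circ\sigma)(X)$ is a singleton, so is $\partial f(\sigma(X))$, and Theorem~\ref{thm:app:single} gives differentiability of $f$ at $\sigma(X)$.

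The converse direction carries both the remaining implication and the gradient formula, and is where the real work lies. Writing $\mu:=\nabla f(\sigma(X))$ so that $\partial f(\sigma(X))=\{\mu\}$, Theorem~\ref{thm:app:Msubg} yields $\partial(f\circ\sigma)(X)=\{U(\text{Diag}\,\mu)V^T : X=U(\text{Diag}\,\sigma(X))V^T\}$, and the task is to show this set is a single matrix in spite of the non-uniqueness of the SVD. The engine of the argument will be two symmetry properties of $\mu$ inherited from absolute symmetry of $f$. Differentiating the identity $f(\gamma)=f(Q\gamma)$ for $Q\in\Lambda_q$ gives the equivariance $\nabla f(Q\gamma)=Q\,\nabla f(\gamma)$; specializing to a $Q$ that fixes $\sigma(X)$ then produces (i) $\mu_i=\mu_j$ whenever $\sigma_i(X)=\sigma_j(X)$ (take $Q$ the transposition of the coordinates $i,j$) and (ii) $\mu_i=0$ whenever $\sigma_i(X)=0$ (take $Q$ the sign flip on coordinate $i$, which fixes $\sigma(X)$ precisely because that entry vanishes).

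With (i)--(ii) established, I would compare two arbitrary decompositions by grouping the coordinates into blocks of equal singular value. On a block with a common positive value, the standard description of SVD non-uniqueness says the two decompositions differ by one shared orthogonal factor, $U_{(k)}\mapsto U_{(k)}Q_k$ and $V_{(k)}\mapsto V_{(k)}Q_k$; since $\mu$ is constant on that block by (i), the contribution $\mu_k U_{(k)}V_{(k)}^T$ is unchanged because $Q_kQ_k^T=\Id$. On the block of zero singular values $\mu$ vanishes by (ii), so that block contributes nothing regardless of the large freedom in choosing the corresponding singular vectors. Consequently $U(\text{Diag}\,\mu)V^T$ is the same for every SVD, the set above is a singleton, and Theorem~\ref{thm:app:single} delivers differentiability of $f\circ\sigma$ at $X$ with gradient equal to that single matrix. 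I expect the \emph{main obstacle} to be exactly this well-definedness step: it is the point at which absolute symmetry is indispensable, and it forces a careful accounting of the full SVD ambiguity---rotations inside repeated-value blocks and the extra freedom in the null space---which properties (i)--(ii) are designed to neutralize.
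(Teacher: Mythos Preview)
The paper does not prove this statement: Theorem~\ref{thm:app:Mgrad} is listed in Appendix~\ref{app:sub} as a quoted result from \cite{Lewis}, alongside Theorems~\ref{thm:app:conv} and \ref{thm:app:Msubg}, and is only used (not derived) in the proof of Theorem~\ref{Thm: Derivative}. So there is no ``paper's own proof'' to compare against.

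That said, your argument is correct and is precisely the route by which the gradient formula is obtained in \cite{Lewis}: one deduces it as a corollary of the subgradient characterization (Theorem~\ref{thm:app:Msubg}) via the convex-analytic equivalence between differentiability and a singleton subdifferential (Theorems~\ref{thm:app:single} and \ref{thm:app:coincDiff}). The two symmetry consequences you extract---$\mu_i=\mu_j$ whenever $\sigma_i(X)=\sigma_j(X)$, and $\mu_i=0$ whenever $\sigma_i(X)=0$---are exactly what is needed, and your handling of the SVD ambiguity (same orthogonal factor on positive blocks, independent factors on the zero block rendered irrelevant by $\mu=0$ there) is the standard and correct accounting. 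The only cosmetic point is that the equivariance identity is more cleanly stated at the subdifferential level, $\partial f(Q\gamma)=Q\,\partial f(\gamma)$, which avoids any worry about differentiability at $Q\gamma$ before you have specialized to $Q$ fixing $\sigma(X)$; but since you immediately restrict to such $Q$, this does not affect the validity of the argument.
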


\newpage
\section{Proofs}\label{app:proofs}
In this section we proof Theorems \ref{Thm: Derivative}, \ref{thm:Approximation}, and \ref{thm:DescentStep} formulated in the main body of the paper.\\

\textbf{Proof of Theorem \ref{Thm: Derivative}}
\begin{proof}
We define $f:\R^\nu\to \R$ and $g:\R^\nu\to \R$ as 
\begin{align*}
    f(x)=\max_{i\in 1\dots\nu} \frac{1}{2}x_i^2, \quad g(x)=\frac{1}{2\nu}\sum_{i=1}^\nu x_i^2.
\end{align*}
One can see that by Definition \ref{thm:app:abs} both $f$ and $g$ are absolutely symmetric and $r(S)=f(\sigma(S))+g(\sigma(S)). $ 
The function $f$ is convex by the rule for pointwise maxima Theorem \ref{thm:app:max} and its subdifferential is given by
\begin{align*}
        \partial f(x) = \text{conv} \{ x_ie_i \mid i: x_i^2 = x_{\max}^2\},
    \end{align*}
    where $e_i$ denotes the $i-th$ standard unit vector and $\max$ denotes the index, for which the largest value is attained.
By the results of Theorem \ref{thm:app:conv} 
    we deduce that $f(\sigma(S))$ is convex and by Theorem \ref{thm:app:Msubg}
    \begin{align*}
        \partial (f\circ \sigma)(S) 
        &= \text{conv} \{\sigma_i(S)u_iv_i^T \mid i: \sigma_i(S) = \sigma_{max}(S)\}. 
    \end{align*}
    The function $g$ is differentiable everywhere and its gradient is given by
    $
        \nabla g(x) = \frac{1}{\nu}x.
    $
    By Theorem \ref{thm:app:Mgrad} we deduce 
    \begin{align*}
        \nabla (g\circ \sigma)( S )&= U(\text{Diag}(\nabla g(\sigma(S)))V^T \\ &= -\frac{1}{\nu} U(\text{Diag}(\sigma(S))V^T = -\frac{1}{\nu}S.
    \end{align*}
    Thus, by Theorem \ref{thm:app:sum}, we derive 
    \begin{align*}
    \partial r(S) = \text{conv} \{\sigma_{i}(S)u_iv_i^T \mid  i:\sigma_i(S) = \sigma_{\max}(S) \} - \frac{1}{\nu}S.
\end{align*} 
If the largest singular value of $S$ is unique, $\partial r(S)$ is a singleton and by Theorem \ref{thm:app:single} therefore differentiable with
\begin{align*}
    \nabla r(S) = \sigma_{\max}(S)u_1v_1^T - \frac{1}{N}S.
\end{align*}
\end{proof}

\textbf{Proof of Theorem \ref{thm:Approximation}}
\begin{proof}
Using the Mean Value Theorem for the logarithm, we derive for $c \in (\sigma_k(S)^2,\sigma_1(S)^2)$
    \begin{align*}
        2\ln{(\kappa(S))}&=\ln{(\kappa(S)^2)}\\&=\ln{(\sigma_1(S)^2)}-\ln{(\sigma_k(S)^2)},\\&= \frac{1}{c} \abs{\sigma_1(S)^2-\sigma_k(S)^2}, \\
        &\leq\frac{1}{\sigma_k(S)^2} (\sigma_1(S)^2-\sigma_k(S)^2).
    \end{align*}
Furthermore estimating the regularizer yields
    \begin{align*}
        2r(S) &= \sigma_1(S)^2 - \frac{1}{\nu}\sum_{i=1}^{k}\sigma_i(S)^2, \\
        &\geq \sigma_1(S)^2-\frac{1}{\nu}\left((k-1)\sigma_1(S)^2+\sigma_k(S)^2 \right),\\
        &\geq \sigma_1(S)^2-\frac{1}{\nu}\left((\nu-1)\sigma_1(S)^2+\sigma_k(S)^2 \right), \\
        &=\frac{1}{\nu}(\sigma_1(S)^2-\sigma_k(S)^2).
    \end{align*}
Combining those two results we conclude:
    \begin{align*}
        \ln{(\kappa(S))} &\leq \frac{\nu}{\sigma_k(S)^2} r(S), \\
        \kappa(S) &\leq e^{\frac{\nu}{\sigma_k(S)^2} r(S)}.
    \end{align*}
\end{proof}

\textbf{Proof of Theorem \ref{thm:DescentStep}}
\begin{proof}
 Let $S=U\text{Diag}(\sigma(S))V^T=\sum_{i=1}^\nu\sigma_i(S)u_iv_i^T$ be the singular value decomposition of $S$. Let $k$ be the index of the smallest non-zero singular value, i.e. $k=\argmin_i\{\sigma_i(S) \mid \sigma_i(S)>0 \text{ for }k\in \{1,\dots,\nu \}\}$. Then it holds that:
    \begin{align*}
        S' &= S - \lambda \nabla r(S) \\ 
        &= \left(1+\frac{\lambda}{\nu}\right) S - \lambda\sigma_1(S)u_1v_1^T \\
        &= \left(1+\frac{\lambda}{\nu}-\lambda\right)\sigma_1(S)u_1v_1^T + \left(1+\frac{\lambda}{\nu}\right)\sum_{i=2}^\nu\sigma_i(S)u_iv_i^T \\
        &= U\text{Diag}(\sigma')V^T, 
    \end{align*}
    where $\sigma' = \left(\left(1+\frac{\lambda}{\nu}-\lambda\right)\sigma_1(S), \left(1+\frac{\lambda}{\nu}\right)\sigma_2(S),\dots, \left(1+\frac{\lambda}{\nu}\right)\sigma_\nu(S)\right)$. Given this decomposition of $S'$, we notice that $\sigma'$ are the singular values of $S'$, but not necessarily in the right order. Therefore we are going to distinct two cases. \smallskip \\
    Since $S$ is assumed to have a unique largest singular value, there exists an $\alpha>1$ s.t. $\sigma_1(S) = \alpha \sigma_2(S)$. Notice that $\kappa(S) \geq \alpha$ holds. Choose $L\in (0,1]$, s.t. for all $\lambda \in (0,L]:$ $1-\frac{\lambda}{1+\frac{\lambda}{\nu}}\geq \frac{1}{\kappa(S)}$.\\
    \textbf{1. Case:} $\frac{1}{\alpha}<1-\frac{\lambda}{1+\frac{\lambda}{\nu}}. $ \\
    By the case distinction we see:
    \begin{align*}
        \left(1+\frac{\lambda}{\nu}-\lambda\right)\sigma_1(S) &> \frac{1}{\alpha} \left(1+\frac{\lambda} 
        {\nu}\right)\sigma_1(S), \\
        &= \left(1+\frac{\lambda} 
        {\nu}\right)\sigma_2(S),
    \end{align*}
     and thus $\sigma_1' > \sigma_2'$. Since $\sigma_i(S)\geq \sigma_j(S)$ for $i<j$, we deduce that $\sigma'$ are exactly the singular values of $S'$ in non-increasing order. Furthermore the amount of non-zero singular values stays the same. Therefore 
     \begin{align*}
         \kappa(S') = \frac{\sigma_1'}{\sigma_k'} &= \frac{ \left(1+\frac{\lambda}{\nu}-\lambda\right)}{ \left(1+\frac{\lambda}{\nu}\right)} \frac{\sigma_1(S)}{\sigma_k(S)} \\
         &= \left(1-\frac{\lambda}{1+\frac{\lambda}{\nu}}\right)\kappa(S).
     \end{align*}
     
     \textbf{2. Case:} $\frac{1}{\alpha}\geq 1-\frac{\lambda}{1+\frac{\lambda}{\nu}}\geq \frac{1}{\kappa(S)}$. \\
     Similarly to the previous case, simple arithmetics show that in this case $\sigma_2'$ becomes the largest singular value of $S'$ and $\sigma_k'$ remains the smallest. Therefore:
     \begin{align*}
         \kappa(S') = \frac{\sigma_2'}{\sigma_k'} = \frac{\sigma_2(S)}{\sigma_k(S)} &= \frac{1}{\alpha} \frac{\sigma_1(S)}{\sigma_k(S)} \\
         &=\frac{1}{\alpha} \kappa(S).
     \end{align*}
     In both cases we see that $\kappa(S')<\kappa(S)$, since $\lambda \in (0,L]$ and $\alpha>1$. 
\end{proof}

\begin{figure}[h]
    \centering
    \begin{subfigure}{0.31\textwidth}
        \centering
        \includegraphics[width=\textwidth]{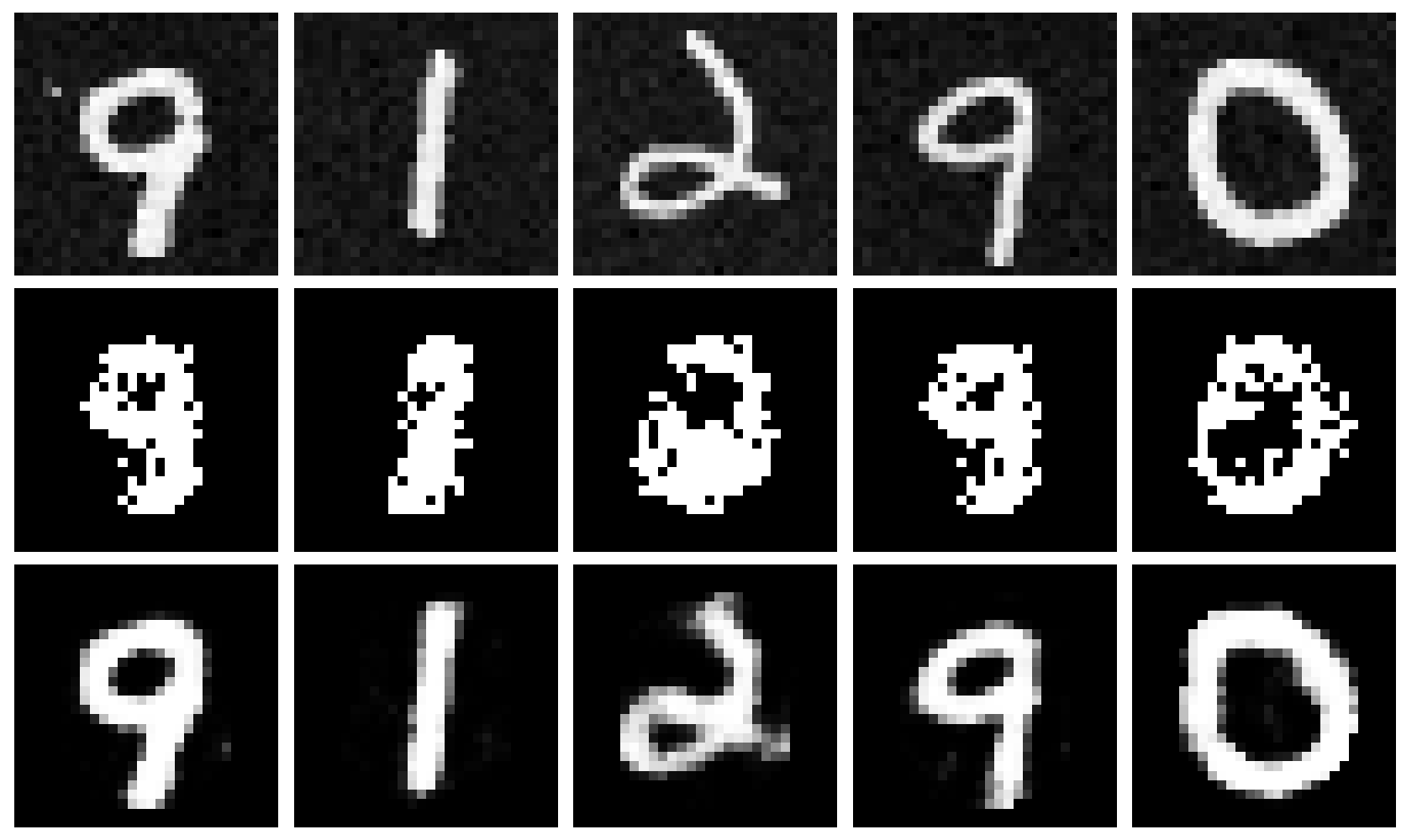}
    \end{subfigure}
    \begin{subfigure}{0.31\textwidth}
        \centering
        \includegraphics[width=\textwidth]{MNIST_denoising.png}
    \end{subfigure}
    \begin{subfigure}{0.31\textwidth}
        \centering
        \includegraphics[width=\textwidth]{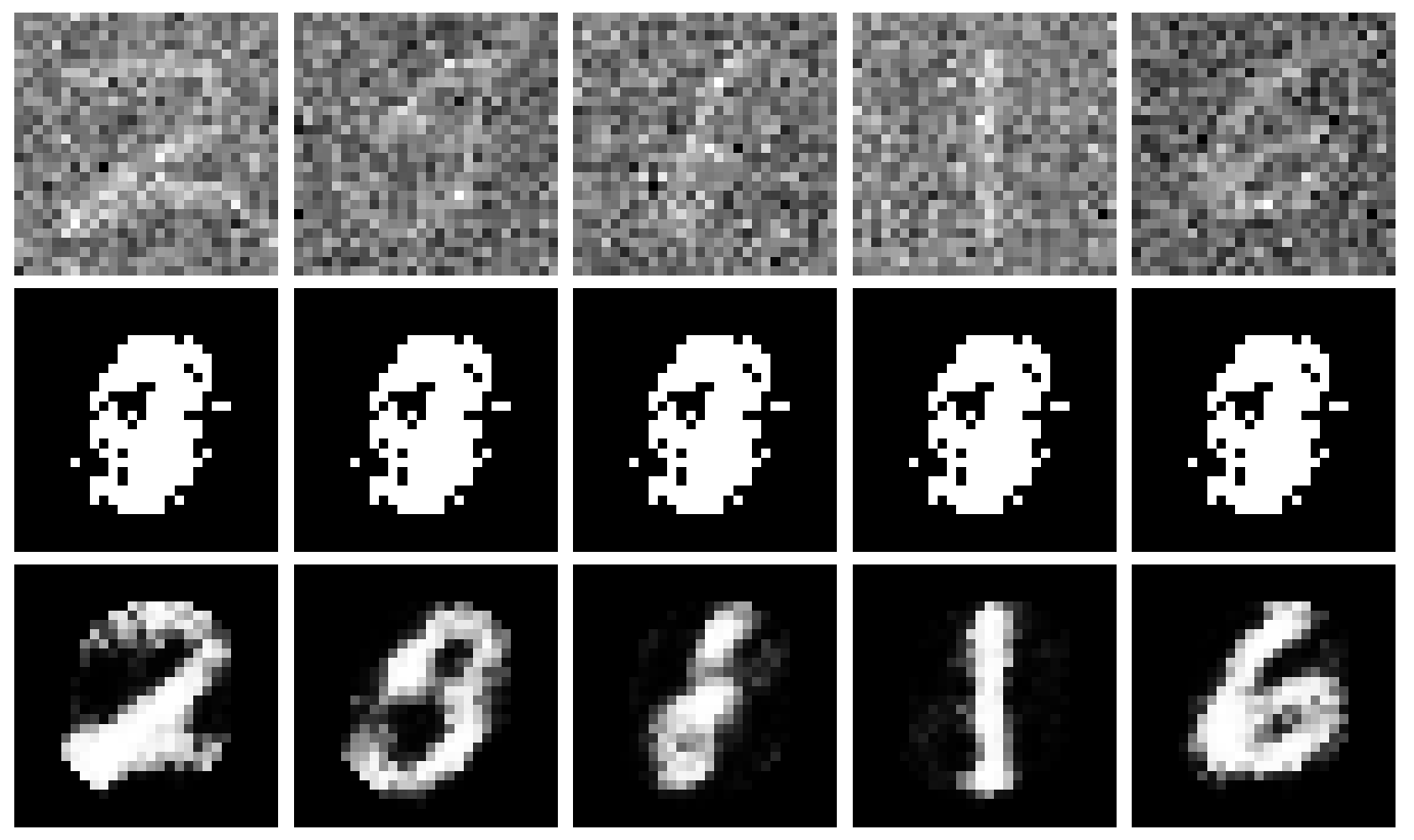}
    \end{subfigure}
    \caption{MNIST denoising results with (bottom) and without regularization (mid) with three different SNRs, from left to right: $10,1,0.5$. Already with reconstructing the images from almost no noise, the non-regularized autoencoder struggles.
    Due to the high condition numbers in the network, the output is very sensitive to perturbations in the input, resulting in the network being unable to learn properly.
    }
    \label{fig:denoising2}
\end{figure}

\section{Discussion on Tikhonov Regularization}\label{app:tikh}
\begin{figure}[h]
    \centering
    
    \centering
    \begin{subfigure}{0.5\textwidth}
        \centering
        \includegraphics[width=\textwidth]{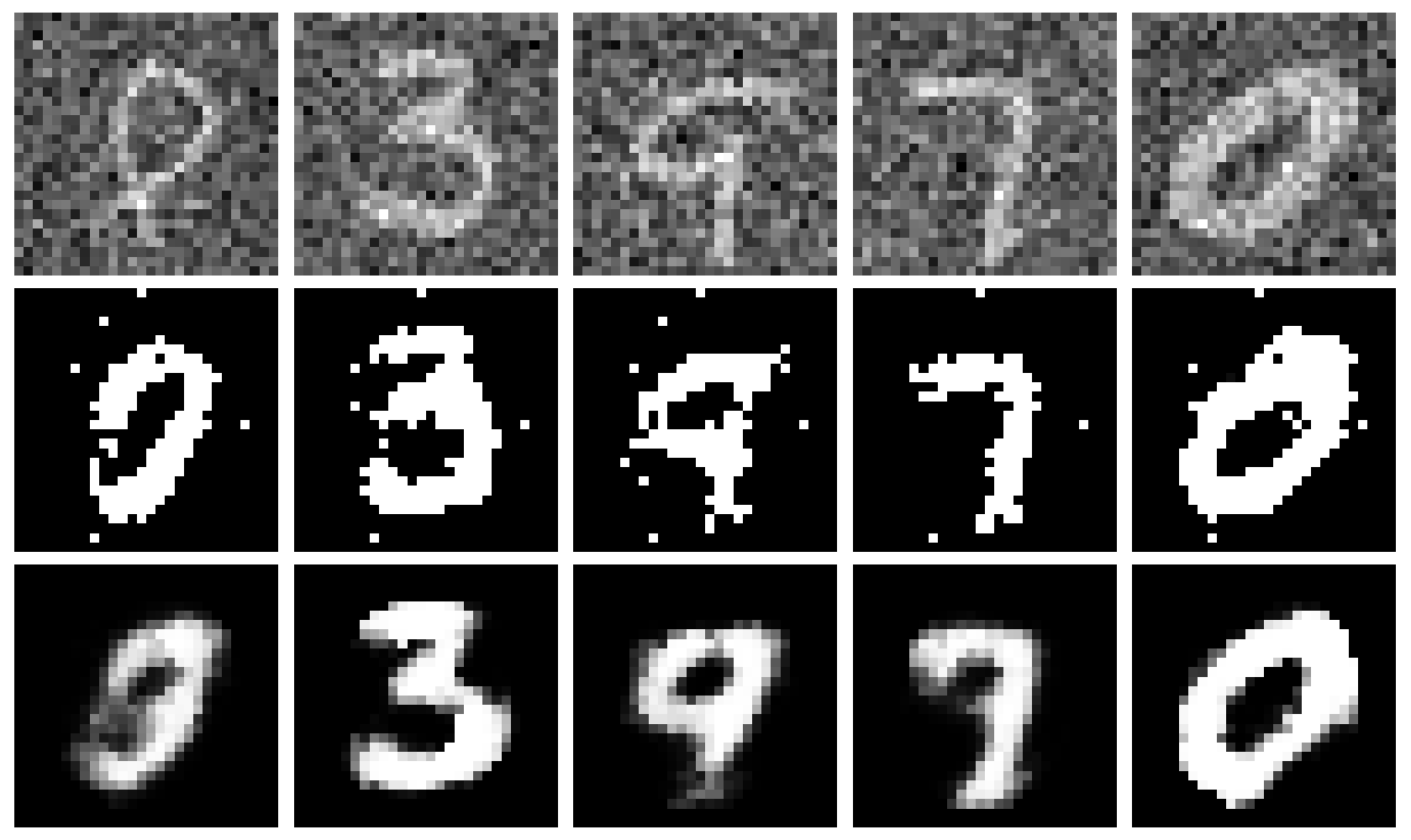}
    \end{subfigure}
    \begin{subfigure}{0.45\textwidth}
        \centering
        \includegraphics[width=\textwidth]{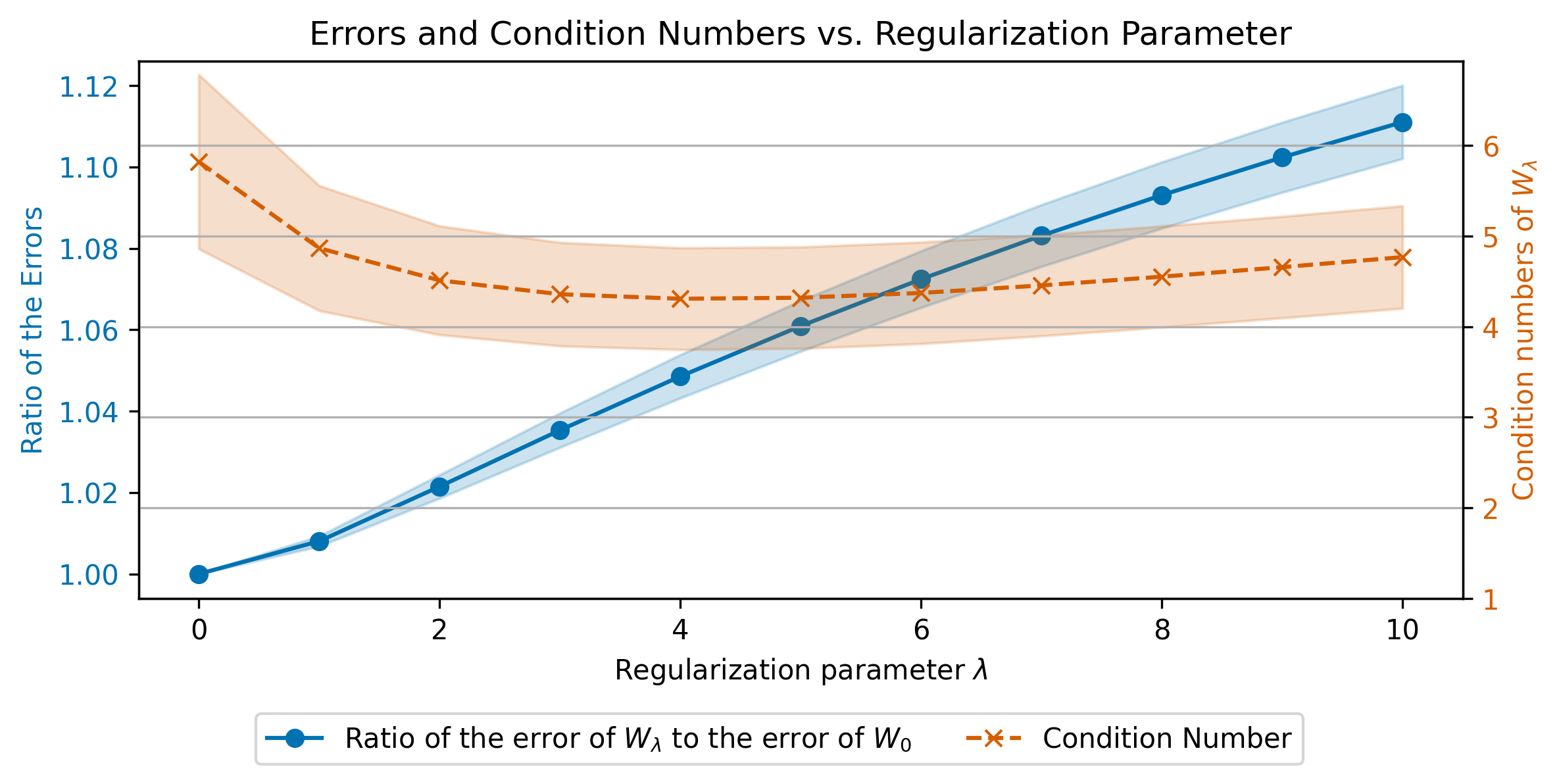}
    \end{subfigure}

    \caption{Left: Results of MNIST denoising with SNR 1 with Tikhonov regularization for two different sets of parameters. Mid: $\lambda_1=0.01$, $\lambda_2=0.0001$. Bottom: $\lambda_1=1$, $\lambda_2=0.01$. 
    Right: Results of least-squares minimization of \eqref{eq:LQM} after $10^5$ iterations with Tikhonov Regularizer for different regularization parameter $\lambda$ values}
    \label{fig:denoisingtik}
\end{figure}
Tikhonov Regularization is a well-known method used to stabilize the solutions of ill-posed problems and to prevent overfitting in machine learning models \cite{Calvetti2003, MLBOOK}. This method adds a regularization term to the loss function, which penalizes large coefficients in the solution, and helps in balancing between fitting the data and simplicity in the solution. 

We include this discussion on Tikhonov regularization due to its widespread recognition and frequent application as a common regularizer, although its direct impact on the condition number of the weight matrices remains, to the best of the authors knowledge, ambiguous in the existing literature.

Mathematically, Tikhonov regularization adds to the standard least squares problem a penalty term proportional to the square of the norm of the coefficients, in our case:
\begin{align*}
    \min_{W\in \R^{n\times m}} \norm{WX-Y}_F^2+\lambda \cdot \norm{W}_F^2,
\end{align*}
where $\lambda$ is the regularization parameter controlling the trade-off between model fit and the magnitude of coefficients \cite{Tikh}.

\subsection{Impact on Condition Number}
While Tikhonov regularization is effective in promoting stability in poorly behaved optimization problems and preventing overfitting, it does not directly address the condition number of the weight matrices.
In the context of neural networks, using the Frobenius norm as a regularizer can help in reducing the overall magnitude of the weights but might not sufficiently control the condition number, as we will see in the following simulation.

We repeat the simulations from Section \ref{sec:basic} for Tikhonov Regularization, where \( r(W) = \norm{W}_F^2 \) in \eqref{eq:LQM}. The results are shown in Figure \ref{fig:denoisingtik}. We observe that Tikhonov regularization slightly reduces the condition number, but not significantly, before the (approximation) error \( \norm{W_\lambda X - Y}_F \) becomes too large compared to the non-regularized error \( \norm{W_0 X - Y}_F \).

\subsection{Numerical Experiments with Tikhonov Regularization}
We conducted numerical experiments on the same problems as in Section \ref{sec:4} to compare the performance of Tikhonov regularization with the proposed method. The results are summarized below:
\subsubsection{\textbf{MNIST Classification}}
Table 3 shows the classification accuracy and condition numbers for different levels of noise (SNR) using Tikhonov regularization. As $\lambda$ increases, the condition numbers of the weight matrices increase dramatically, indicating potential numerical instability. Yet, we see that it performs well at a medium noise level (SNR 1) and outperforms the baseline, as well as our proposed regularizer in all settings of $\lambda$. For the high noise level (SNR 0.5), however, it falls back again. 
\subsubsection{\textbf{MNIST Denoising}}
Figure 4 illustrates the denoising performance with low (mid) and high (bottom) influence by means of the values $\lambda_1,\lambda_2$. Table 4 shows the corresponding condition numbers. We see that Tikhonov regularization fails to maintain low condition numbers consistently across all layers, which explains the worse denoising performance compared to the proposed regularizer.

\subsection{Conclusion}
While Tikhonov regularization offers benefits in terms of regularizing the magnitude of weights and robustness to a medium level of noise, it does not effectively control the condition number of weight matrices, leading to potential numerical instability. The proposed regularizer in this paper addresses this limitation by specifically targeting the condition number, thus enhancing the robustness and stability of neural networks, especially in very noisy environments.

\begin{table}[t]
  \centering
  \begin{minipage}[t]{0.49\textwidth}
    \centering
    \begin{tabular}{@{}l|llll@{}}
      \toprule
      \rowcolor{lightgray}
      $\lambda$ & $\kappa(W_1)$ & SNR $\infty$ & SNR $1$ & SNR $0.5$ \\
      \midrule
      $0$ & \textbf{43.29} & \textbf{98.42 \%} & 93.80 \% & 71.91 \% \\
      $10^{-3}$ & \textbf{56.43} & \textbf{98.31 \%} & 94.34 \% & 74.24 \% \\
      $10^{-2}$ & 242.39 & 98.25 \% & 94.83 \% & 76.94 \% \\
      $10^{-1}$ & 4044.34 & 98.15 \% & \textbf{95.10 \%} & \textbf{78.67 \%} \\
      $1$ & 4020.47 & 97.68 \% & \textbf{95.02 \%} & \textbf{82.79 \%} \\
      \bottomrule
    \end{tabular}
    \caption{Results when using Tikhonov regularization for MNIST classification.}
    \label{tab:classtik}
  \end{minipage}%
  \hfill
  \begin{minipage}[t]{0.49\textwidth}
    \centering
    \begin{tabular}{@{}ll|llll@{}}
      \toprule
      \rowcolor{lightgray}
      $\lambda_1$ & $\lambda_2$ & $\kappa(E_1)$ & $\kappa(E_2)$ & $\kappa(D_2)$ & $\kappa(D_1)$ \\
      \midrule
      $1$ & $0.01$ & 12.80 & 1.43 & 2.65 & 6061.53 \\
      $0.1$ & $0.001$ & 1277.21 & 31641.66 & 20358.70 & 5347.09 \\
      \bottomrule
    \end{tabular}
    \caption{Condition numbers of the weight matrices when using Tikhonov regularization for denoising in two parameter settings.}
    \label{tab:denoisetik}
  \end{minipage}
\end{table}

\end{document}